
\documentclass{article}

\usepackage{microtype}
\usepackage{graphicx}
\usepackage{subfigure}
\usepackage{booktabs} 
\usepackage{xcolor}

\usepackage{tcolorbox}

\usepackage{hyperref}


\usepackage[accepted]{icml2024}


\usepackage{amsmath}
\usepackage{amssymb}
\usepackage{mathtools}
\usepackage{amsthm}

\usepackage[capitalize,noabbrev]{cleveref}
\usepackage{thmtools,thm-restate}
\theoremstyle{plain}
\newtheorem{theorem}{Theorem}[section]

\newtheorem{lemma}[theorem]{Lemma}

\theoremstyle{definition}

\newtheorem{assumption}[theorem]{Assumption}
\theoremstyle{remark}

\usepackage[textsize=tiny]{todonotes}
\newif\ifspacehack
\usepackage{natbib}
\hypersetup{
    colorlinks = true,
    breaklinks,
    linkcolor = blue,
    citecolor = blue,
    urlcolor  = blue,
}
\usepackage{url} 
\usepackage{graphicx}
\usepackage{mathtools}
\usepackage{footnote}
\usepackage{float}
\usepackage{xspace}
\usepackage{multirow}
\usepackage{xcolor}
\usepackage{wrapfig}
\usepackage{framed}
\usepackage{bbm}
\usepackage{footnote}
\usepackage{nicefrac}
\usepackage{makecell}
\usepackage[algo2e]{algorithm2e} 
\usepackage{algorithm}
\usepackage{amssymb}
\usepackage{cleveref}
\usepackage{bm}
\makesavenoteenv{tabular}
\makesavenoteenv{table}

\renewcommand{\tilde}{\widetilde}

\def \R {\mathbb{R}}
\newcommand{\eps}{\epsilon}

\newcommand{\calA}{{\mathcal{A}}}

\newcommand{\calX}{{\mathcal{X}}}

\newcommand{\calF}{{\mathcal{F}}}

\newcommand{\calG}{{\mathcal{G}}}

\newcommand{\calQ}{{\mathcal{Q}}}

\newcommand{\calN}{{\mathcal{N}}}

\newcommand{\cF}{{\mathcal{F}}}

\newcommand{\E}{{\mathbb{E}}}

\newcommand{\order}{\mathcal{O}}

\DeclareMathOperator*{\argmin}{argmin}
\DeclareMathOperator*{\argmax}{argmax}

\newcommand{\wh}{\widehat}

\DeclareMathOperator{\conv}{conv}

\newcommand{\squareCB}{\ensuremath{\mathsf{SquareCB}}\xspace}

\newcommand{\graphCB}{\ensuremath{\mathsf{SquareCB.G}}\xspace}
\newcommand{\graphCBp}{\ensuremath{\mathsf{SquareCB.UG}}\xspace}

\newcommand{\AlgSq}{\ensuremath{\mathsf{AlgSq}}\xspace}
\newcommand{\AlgLog}{\ensuremath{\mathsf{AlgLog}}\xspace}

\newcommand{\dec}{\ensuremath{\mathsf{dec}}\xspace}
\newcommand{\decp}{\ensuremath{\mathsf{dec}^{\mathsf{P}}}\xspace}

\newcounter{numrellocal}
\newcounter{numrelglobal}
\makeatletter
\makeatother
\AfterEndEnvironment{align*}{\setcounter{numrellocal}{0}}

\newcommand{\paren}[1]{\left({#1}\right)}

\newcommand{\otil}{\ensuremath{\tilde{\mathcal{O}}}}


\usepackage{lipsum,booktabs}
\usepackage{amsmath,mathrsfs,amssymb,amsfonts,bm,enumitem}
\usepackage{rotating}
\usepackage{pdflscape}
\usepackage{hyperref,url}
\hypersetup{
    colorlinks,
    breaklinks,
    linkcolor = blue,
    citecolor = blue,
    urlcolor  = blue,
}
\allowdisplaybreaks
\usepackage{appendix}
\usepackage{multirow,makecell}

\usepackage{algorithmic,algorithm}

\renewcommand{\tilde}{\widetilde}


\def \E {\mathbb{E}}

\def \R {\mathbb{R}}

\newcommand{\RegSq}{\ensuremath{\mathrm{\mathbf{Reg}}_{\mathsf{Sq}}}\xspace}
\newcommand{\RegCB}{\ensuremath{\mathrm{\mathbf{Reg}}_{\mathsf{CB}}}\xspace}

\newcommand{\RegLogG}{\ensuremath{\mathrm{\mathbf{Reg}}_{\mathsf{Log}}}\xspace}
\newcommand{\RegLogGF}{\ensuremath{\mathrm{\mathbf{Reg}}_{\mathsf{Log}}}\xspace}

\usepackage{mathtools}





\usepackage{graphicx,color} 

\definecolor{wine_red}{RGB}{228,48,64}
\definecolor{DSgray}{cmyk}{0,1,0,0}

\usepackage{prettyref}
\newcommand{\pref}[1]{\prettyref{#1}}

\newcommand{\savehyperref}[2]{\texorpdfstring{\hyperref[#1]{#2}}{#2}}
\newrefformat{eq}{\savehyperref{#1}{Eq. \textup{(\ref*{#1})}}}
\newrefformat{eqn}{\savehyperref{#1}{Eq.~(\ref*{#1})}}
\newrefformat{lem}{\savehyperref{#1}{Lemma~\ref*{#1}}}
\newrefformat{def}{\savehyperref{#1}{Definition~\ref*{#1}}}
\newrefformat{line}{\savehyperref{#1}{Line~\ref*{#1}}}
\newrefformat{thm}{\savehyperref{#1}{Theorem~\ref*{#1}}}
\newrefformat{corr}{\savehyperref{#1}{Corollary~\ref*{#1}}}
\newrefformat{cor}{\savehyperref{#1}{Corollary~\ref*{#1}}}
\newrefformat{sec}{\savehyperref{#1}{Section~\ref*{#1}}}
\newrefformat{app}{\savehyperref{#1}{Appendix~\ref*{#1}}}
\newrefformat{assum}{\savehyperref{#1}{Assumption~\ref*{#1}}}
\newrefformat{asm}{\savehyperref{#1}{Assumption~\ref*{#1}}}
\newrefformat{ex}{\savehyperref{#1}{Example~\ref*{#1}}}
\newrefformat{fig}{\savehyperref{#1}{Figure~\ref*{#1}}}
\newrefformat{alg}{\savehyperref{#1}{Algorithm~\ref*{#1}}}
\newrefformat{rem}{\savehyperref{#1}{Remark~\ref*{#1}}}
\newrefformat{conj}{\savehyperref{#1}{Conjecture~\ref*{#1}}}
\newrefformat{prop}{\savehyperref{#1}{Proposition~\ref*{#1}}}
\newrefformat{proto}{\savehyperref{#1}{Protocol~\ref*{#1}}}
\newrefformat{prob}{\savehyperref{#1}{Problem~\ref*{#1}}}
\newrefformat{claim}{\savehyperref{#1}{Claim~\ref*{#1}}}
\newrefformat{que}{\savehyperref{#1}{Question~\ref*{#1}}}
\newrefformat{op}{\savehyperref{#1}{Open Problem~\ref*{#1}}}
\newrefformat{fn}{\savehyperref{#1}{Footnote~\ref*{#1}}}

\def \epsilon {\varepsilon}

\newcommand{\pa}[1]{\left(#1\right)}

\icmltitlerunning{Efficient Contextual Bandits with Uninformed Feedback Graphs}

\begin{document}

\twocolumn[
\icmltitle{Efficient Contextual Bandits with Uninformed Feedback Graphs}




\begin{icmlauthorlist}
\icmlauthor{Mengxiao Zhang}{usc}
\icmlauthor{Yuheng Zhang}{uiuc}
\icmlauthor{Haipeng Luo}{usc}
\icmlauthor{Paul Mineiro}{msr}
\end{icmlauthorlist}

\icmlaffiliation{usc}{University of Southern California}
\icmlaffiliation{uiuc}{University of Illinois Urbana-Champaign}
\icmlaffiliation{msr}{Microsoft Research}


\icmlkeywords{Machine Learning, ICML}

\vskip 0.3in
]



\printAffiliationsAndNotice{} 

\begin{abstract}
Bandits with feedback graphs are powerful online learning models that interpolate between the full information and classic bandit problems, capturing many real-life applications.
A recent work by \citet{zhang2023practical} studies the contextual version of this problem and proposes an efficient and optimal algorithm via a reduction to online regression.
However, their algorithm crucially relies on seeing the feedback graph before making each decision, while in many applications, the feedback graph is \emph{uninformed}, meaning that it is either only revealed after the learner makes her decision or even never fully revealed at all.
This work develops the first contextual algorithm for such uninformed settings, via an efficient reduction to online regression over both the losses and the graphs.
Importantly, we show that it is critical to learn the graphs using \emph{log loss} instead of squared loss to obtain favorable regret guarantees. 
We also demonstrate the empirical effectiveness of our algorithm on a bidding application using both synthetic and real-world data.
\end{abstract}

\section{Introduction}\label{sec:intro}
In this paper, we consider efficient algorithm design for contextual bandits with directed feedback graphs, which generalizes classic contextual bandits~\citep{auer2002nonstochastic, langford2007epoch}. The interaction between the learner and the environment lasts for $T$ rounds. At each round, the learner first observes a context and then chooses one of $K$ actions, while simultaneously an adversary decides the loss for each action and a \emph{directed feedback graph} with the $K$ actions as nodes. After that, the learner suffers the loss of the chosen action, and her observation is determined based on this feedback graph. Specifically, she observes the loss of every action to which the chosen action is connected. This problem reduces to the classic contextual bandits when the feedback graph only contains self-loops, and generally captures more real-world applications, such as personalized web advertising~\citep{mannor2011bandits}:

The non-contextual version of this problem has been studied extensively in the literature~\citep{mannor2011bandits,alon2015online,alon2017nonstochastic}. \citet{alon2015online} provided a full characterization on the minimax regret rate with respect to different graph theoretic quantities 
according to different types of the feedback graphs. However, the more practically useful contextual version is much less explored. A recent work by~\citet{zhang2023practical} first considered this problem with adversarial context and general feedback graphs, 
and proposed an efficient algorithm achieving minimax regret rates. However, their algorithm only works in the \emph{informed} setting where the feedback graph is revealed to the learner before her decision. 
In many applications, such as online pricing~\citep{cohen2016online}, the feedback graph is not available to the learner when (or even after) she makes the decision. \citet{cohen2016online} first considered this more challenging \emph{uninformed} setting without context and derived algorithms achieving near-optimal regret guarantees. However, there is no prior work offering a solution for contextual bandits with uninformed feedback graphs. In this work, we take the first step in this direction and propose the first efficient algorithms achieving strong regret guarantees. 
Specifically, our contributions are as follows.

\paragraph{Contributions.} Our algorithm, \graphCBp, is based on the \graphCB algorithm of~\citet{zhang2023practical}. Assuming realizability on the loss function and an online square loss regression oracle, \citet{zhang2023practical} extended the minimax framework for contextual bandits~\citep{foster2020beyond,foster2021efficient} to contextual bandits under informed feedback graphs. With uninformed graphs, we further assume that they are realizable by another function class $\calG$ and propose to learn them simultaneously so that in each round we can plug in the predicted graph into \graphCB.
While the idea is natural, our analysis is highly non-trivial and, perhaps more importantly, reveals that it is crucial to learn these graphs using \emph{log loss} instead of squared loss.

More specifically, within the uninformed setting, 
we analyze two different types of feedback on the graph structure. In the \emph{partially revealed graph} setting, 
the learner only observes which actions are connected to the selected action, and our algorithm achieves $\otil(\sqrt{\alpha(\calG) T})$ regret (ignoring the regression overhead), where $\alpha(\calG)$ is the maximum expected independence number over all graphs in $\calG$; in the easier \emph{fully revealed graph setting}, 
the learner observes the entire graph after her decision, and our algorithm achieves an improved $\otil\Big(\sqrt{\sum_{t=1}^T\alpha_t}\Big)$ regret bound, where $\alpha_t$ denotes the expected independence number of the feedback graph at round $t$.\footnote{Definitions of all graph-theoretic numbers are formally introduced in \pref{sec:pre}. Also, for simplicity this paper only considers strongly observable graphs where $\sqrt{T}$-regret is achievable, but our ideas can be directly generalized to weakly observable graphs as well (where $T^{2/3}$-regret is minimax optimal). \label{fn:strong_vs_weak}} 
We note that this latter bound even matches the optimal regret for the easier informed setting~\citep{zhang2023practical}.

In addition to these strong theoretical guarantees, we also empirically test our algorithm on a bidding application with both synthetic and real-world data and show that it indeed outperforms the greedy algorithm or algorithms that ignore the additional feedback from the graphs.

\paragraph{Related works.} 

(Non-contextual) multi-armed bandits with feedback graphs was first studied by \citet{mannor2011bandits}. \citet{alon2015online} characterized the minimax rates in terms of graph-theoretic quantities under deterministic feedback graphs and proposed algorithms achieving near-optimal guarantees. 
Since then, many different extensions have been studied, including stochastic feedback graphs~\citep{kocak2016online,liu2018information,li2020stochastic,esposito2022learning}, uninformed feedback graphs~\citep{cohen2016online,esposito2022learning}, algorithms that adapt to both adversarial and stochastic losses~\citep{erez2021towards,ito2022nearly,dann2023blackbox}, data-dependent regret
bounds~\citep{lykouris2018small,lee2020closer}, and high-probability regret~\citep{neu2015explore,luo2023improved}.

The contextual version of this problem has only been studied very recently.
\citet{wang2021adversarial} developed algorithms for adversarial linear bandits with uninformed graphs and stochastic contexts, but assumed several strong assumptions on both the policy class and the context space. \citet{zhang2023practical} is the closest to our work. They also considered adversarial contexts and realizable losses, 
but as mentioned, their algorithm is only applicable to the informed setting.
Moreover, their theoretical results are also restricted to deterministic feedback graphs only.
Our work generalizes theirs from the informed setting to the uninformed setting and from deterministic graphs to stochastic graphs.

Our work is also closely related to the recent trend of designing efficient algorithms for contextual bandits. Since \citet{langford2007epoch} initiated the study of efficient learning in contextual bandits, many follow-ups develop efficient contextual bandits algorithms via reduction to either cost-sensitive classification~\citep{dudik2011efficient,agarwal2014taming} or online/offline regression~\citep{foster2020beyond,foster2021efficient,foster2021statistical,xu2020upper,simchi2022bypassing}. We follow the latter approach and reduce our problem to online regression on both the losses and the feedback graphs.
\section{Preliminary}\label{sec:pre}
Throughout the paper, we denote the set $\{1,2,\dots,m\}$ for some positive integer $m$ by $[m]$,
the set of distributions over some set $S$ by $\Delta(S)$,
and the convex hull of some set $S$ by $\conv(S)$.
For a vector $v\in \R^m$ and a matrix $M\in \R^{m\times m}$, $v_i$ denotes the $i$-th coordinate of $v$ and $M_{i,j}$ denotes the $(i,j)$'s entry of $M$ for $i,j\in[m]$.

The contextual bandits problem with uninformed feedback graphs proceeds
in $T$ rounds. 
At each round $t$, the environment (possibly randomly and adaptively) selects a context $x_t$ from some arbitrary context space $\calX$, a loss vector $\ell_t\in [0,1]^{K}$ specifying the loss of each of the $K$ possible actions, and finally a directed feedback graph $G_t=([K], E_t)$ where $E_t\subseteq [K]\times[K]$ denotes the set of directed edges.
The learner then observes the context $x_t$ (but not $\ell_t$ or $G_t$) and has to select an action $i_t\in [K]$.
At the end of this round, the learner suffers loss $\ell_{t,i_t}$ and observes the loss of every action connected to $i_t$ (not necessarily including $i_t$ itself): 
$\ell_{t,j}$ for all $j\in A_t$, where $A_t=\{j\in[K], (i_t,j)\in E_t\}$. 
In the \emph{partially revealed graph} setting, the learner does not observe anything else about the graph (other than $A_t$), while in the \emph{fully revealed graph} setting, the learner additionally observes the entire graph (that is, $E_t$). 

\citet{alon2015online} showed that in the non-contextual version of this problem, there are essentially only two types of nontrivial and learnable feedback graphs: strongly observable graphs and weakly observable graphs.
For simplicity, our work focuses soly on the first type, that is, we assume that $G_t$ is always strongly observable, meaning that for each node $i\in[K]$, either it can observe itself ($(i,i) \in E_t$) or it can be observed by any other nodes ($(j,i) \in E_t$ for any $j\in[K]\backslash\{i\}$).
As mentioned in \pref{fn:strong_vs_weak}, our results can be directly generalized to weakly observable graphs as well.

Bandits with uninformed feedback graphs naturally capture many applications such as online pricing, viral marketing, and recommendation in social networks~\citep{kocak2014efficient,alon2015online,alon2017nonstochastic,rangi2019online,cohen2016online,liu2018information}.
By incorporating contexts, which are broadly available in practice, our model significantly increases its applicability in real world.
For a concrete example, see \pref{sec:experiment} for an application of bidding in a first-price auction.

\paragraph{Realizability and oracle assumptions.}
Following a line of recent works on developing efficient contextual bandit algorithms, we make the
following standard realizability assumption on the loss function, stating that the expected loss of each action can be perfectly predicted by an unknown loss predictor from a known class:

\begin{assumption}[Realizability of mean loss]\label{assum:loss}
    We assume that the learner has access to a function class $\calF\subseteq (\calX \times [K] \mapsto[0, 1])$ in which there exists an unknown regression function $f^\star\in\calF$ such that for any $i\in[K]$ and $t\in [T]$, we have $\E[\ell_{t,i}~|~x_t]=f^\star(x_t,i)$.
\end{assumption}

The goal of the learner is naturally to be comparable to an oracle strategy that knows $f^\star$ ahead of time, formally measured by the (expected) regret:
\begin{align*}
\RegCB\triangleq\E\left[\sum_{t=1}^T (f^\star(x_t,i_t)-\min_{i\in[k]}f^\star(x_t,i) )\right].
\end{align*}

To efficiently minimize regret for a general class $\calF$, it is important to assume some oracle access to this class. 
To this end, we follow prior works and assume that the learner is given an \emph{online regression oracle} \AlgSq for function
class $\cF$, which follows the following protocol: at each round $t \in [T]$, the oracle \AlgSq produces an estimator $f_t\in\conv(\calF)$, then
receives a context $x_t$ and a set $S_t$ of action-loss pairs in the form $(a, c) \in [K] \times [0,1]$.
The squared loss of the oracle for this round is $\sum_{(a,c) \in S_t} (f_t(x_t,a)-c)^2$, which is on average assumed to be close to that of the best predictor in $\cF$:
\begin{assumption}[Bounded squared loss regret]
\label{asm:regression_oracle_loss}
The regression oracle \AlgSq guarantees: 
\begin{align*}
&\sum_{t=1}^T \sum_{(a,c) \in S_t} (f_t(x_t, a) - c)^2
                   \\
                   &\qquad-\inf_{f \in \cF}\sum_{t=1}^T \sum_{(a,c)\in S_t} (f(x_t, a) - c)^2
  \leq \RegSq.
\end{align*}
\end{assumption}

Here, $\RegSq$ is a regret bound that is sublinear in $T$ and depends on some complexity measure of $\cF$; see e.g.~\citet{foster2020beyond} for concrete examples of such oracles and the corresponding regret bounds.
The point is that online regression is such a standard machine learning practice, so reducing our problem to online regression is both theoretically reasonable and  practically desirable. 

So far, we have made exactly the same assumptions as~\citet{zhang2023practical} which studies the informed setting.
In our uninformed setting, however, since nothing is known about the feedback graph before deciding which action to take, we propose to additionally learn the feedback graphs, which requires the following extra realizability and oracle assumptions related to the graphs.
\begin{algorithm*}[t]
\caption{\graphCBp}\label{alg:squareCB.GPLUS}

Input: parameter $\gamma\geq 0$, a regression oracle $\AlgSq$ for loss prediction, and a regression oracle $\AlgLog$ for graph prediction

\For{$t=1,2,\dots,T$}{

    Receive context $x_t$.

    Obtain a loss estimator ${f}_t$ from the oracle $\AlgSq$ and a graph estimator $g_t$ from $\AlgLog$.

    Compute $p_t =\argmin_{p\in \Delta([K])} {\dec}_{\gamma}(p;{f}_t,g_t,x_t)$ (a simple convex program; see \pref{app:implementation}),
    where
    {\small
    \begin{align}\label{eqn:minimax_p_alg}
 {\dec}_{\gamma}(p;f,g,x) &= \sup_{\substack{i^\star \in [K] \\ v^\star\in [0,1]^K}}
	\left[ \sum_{i=1}^Kp_iv^\star_i - v^\star_{i^\star} -\frac{\gamma}{4}\sum_{i=1}^K p_i\sum_{j=1}^K g(x,i,j)\paren{f(x,j) - v^\star_j}^2 \right].
    \end{align}}

    The environment decides loss $\ell_t$ and feedback graph $G_t = ([K], E_t)$.

    The learner samples $i_t$ from $p_t$ and observe $\{ \ell_{t,j} \}_{j \in A_t}$ where $A_t=\{j\in[K]: (i_t,j)\in E_t\}$.

    Feed $x_t$ and $\{(j, \ell_{t,j})\}_{j\in A_t}$ to the oracle \AlgSq.
    
    \textbf{Case 1 (Partially revealed graph):} Construct $S_t=\{(i_t,j,1)\}_{j\in A_t} \cup \{(i_t,j,0)\}_{j\notin A_t}$. 
    
    \textbf{Case 2 (Fully revealed graph):} Observe $E_t$ and construct $S_t=\{(i,j,1)\}_{(i,j)\in E_t} \cup \{(i,j,0)\}_{(i,j)\notin E_t}$. 

    Feed $x_t$ and $S_t$ to the oracle $\AlgLog$.
}
\end{algorithm*}

\begin{assumption}[Realizability of mean graph]\label{assum:graph}
    We assume that the learner has access to a function class $\calG\subseteq (\calX \times [K] \times [K] \mapsto[0, 1])$ in which there exists a regression function $g^\star\in\calG$ such that for any $(i,j)\in[K]\times [K]$ and $t\in [T]$, we have $\E[\mathbbm{1}\{(i,j)\in E_t\}~|~x_t]=g^\star(x_t,i,j)$. 
\end{assumption}

Similarly, since we do not impose specific structures on $\calG$, we assume that the learner can access $\calG$ through the use of another online oracle $\AlgLog$: at each round $t\in[T]$, $\AlgLog$ produces an estimator $g_t\in\conv(\calG)$, then receives a context $x_t$ and a set $S_t$ of tuples in the form $(i,j,b) \in [K]\times[K]\times\{0,1\}$ where $b=1$ ($b=0$) means $i$ is (is not) connected to $j$.
Importantly, our analysis shows that it is critical for this oracle to learn the graphs using \emph{log loss} instead of squared loss (hence the name $\AlgLog$); see detailed explanations in \pref{sec:analysis_partial}.
More specifically, we assume that the oracle satisfies the following regret bound measured by log loss:

\begin{assumption}[Bounded log loss regret]
\label{asm:regression_oracle_graph}
The regression oracle $\AlgLog$ guarantees: 
\begin{align*}
&\sum_{t=1}^T \sum_{(i,j,b)\in S_t}\ell_{\log}(g_t(x_t,i,j), b) \\
&\qquad - \inf_{g\in\calG}\sum_{t=1}^T\sum_{(i,j,b)\in S_t}\ell_{\log}(g(x_t,i,j), b) \leq \RegLogG,
\end{align*}
where for two scalars $u,v\in[0,1]$, $\ell_{\log}(u,v)$ is defined as
\begin{align*}
    \ell_{\log}(u,v) = v\log\frac{1}{u}+(1-v)\log\frac{1}{1-u}.
\end{align*}
\end{assumption}

Once again, the bound $\RegLogG$ is sublinear in $T$ and depends on some complexity measure of $\calG$.
We note that regression using log loss is also highly standard in practice.
For concrete examples, we refer the readers to~\citet{foster2021efficient} where the same log loss oracle was used (for a different purpose of obtaining first-order regret guarantees for contextual bandits).
In our analysis, we also make use of the following important technical lemma that connects the log loss regret with something called the \emph{triangular discrimination} under the realizability assumption.

\begin{restatable}[Proposition 5 of \citet{foster2021efficient}]{lemma}{log_sq_convert}\label{lem:log-sq-convert}
    Suppose for each $t$ and $(i,j,b)\in S_t$, we have $\E[b|x_t] = g^\star(x_t, i,j)$. Then oracle $\AlgLog$ guarantees:
    \begin{align*}
        \E\left[\sum_{t=1}^T\sum_{(i,j,b)\in S_t}\frac{(g_t(x_t,i,j)-g^\star(x_t,i,j))^2}{g_t(x_t,i,j)+g^\star(x_t,i,j)}\right] \leq 2\RegLogG.
    \end{align*}
\end{restatable}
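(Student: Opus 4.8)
The plan is to chain the log-loss oracle guarantee of \pref{asm:regression_oracle_graph} to the claimed bound through two ingredients: a conditional-expectation identity that turns the expected excess log loss into a sum of binary KL divergences, and a pointwise inequality that lower-bounds each such KL by the corresponding triangular-discrimination term.

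First I would instantiate the infimum in \pref{asm:regression_oracle_graph} at the realizable predictor $g^\star\in\calG$ guaranteed by \pref{assum:graph}. Since $\inf_{g\in\calG}\sum_{t}\sum_{(i,j,b)\in S_t}\ell_{\log}(g(x_t,i,j),b)\le\sum_{t}\sum_{(i,j,b)\in S_t}\ell_{\log}(g^\star(x_t,i,j),b)$, the oracle bound yields, deterministically on every realization,
\[
\sum_{t=1}^T\sum_{(i,j,b)\in S_t}\Big(\ell_{\log}(g_t(x_t,i,j),b)-\ell_{\log}(g^\star(x_t,i,j),b)\Big)\le\RegLogG.
\]

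Next I would take expectations and condition on all randomness determined before the label $b$ is drawn. Writing $p=g^\star(x_t,i,j)$ and $q=g_t(x_t,i,j)$, the realizability hypothesis $\E[b\mid x_t]=p$ gives, by a one-line computation, $\E\big[\ell_{\log}(q,b)-\ell_{\log}(p,b)\mid x_t\big]=p\log\tfrac{p}{q}+(1-p)\log\tfrac{1-p}{1-q}=\KL\big(\mathrm{Ber}(p)\,\|\,\mathrm{Ber}(q)\big)$, i.e. the conditional expected per-term excess log loss is exactly a binary KL divergence. Summing over $t$ and $S_t$ and using the tower rule, the expectation of the left-hand side above equals $\E\big[\sum_t\sum_{(i,j,b)\in S_t}\KL(\mathrm{Ber}(p)\,\|\,\mathrm{Ber}(q))\big]$, which is therefore at most $\RegLogG$. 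The conditioning is legitimate because, in Case~1, $i_t$ is sampled from $p_t$ independently of the stochastic graph given $x_t$, so $\E[b\mid x_t,i_t,j]=g^\star(x_t,i_t,j)$ by \pref{assum:graph}, while in Case~2 the pairs in $S_t$ are fixed and \pref{assum:graph} applies verbatim.

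The crux — and the step I expect to require the most care — is the pointwise inequality
\[
\frac{(p-q)^2}{p+q}\le 2\,\KL\big(\mathrm{Ber}(p)\,\|\,\mathrm{Ber}(q)\big),\qquad p,q\in[0,1].
\]
I would prove the stronger statement that the full binary triangular discrimination $\frac{(p-q)^2}{p+q}+\frac{(p-q)^2}{2-p-q}$ is at most $2\,\KL$, which dominates the single term we need since both summands are nonnegative. A clean route goes through the squared Hellinger distance $H^2$: using $(\sqrt{a}+\sqrt{b})^2\le 2(a+b)$ one gets triangular discrimination $\le 4H^2$, while Jensen's inequality applied to $-\log$ yields the standard bound $\KL\ge 2H^2$; combining these gives triangular discrimination $\le 2\,\KL$. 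Substituting this pointwise bound into the expected-KL estimate from the previous step produces exactly the claimed $2\RegLogG$ bound, completing the argument.
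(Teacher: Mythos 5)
Your proof is correct, but note that the paper contains no internal proof of this lemma to compare against: it is imported verbatim as Proposition 5 of \citet{foster2021efficient}, and your argument essentially reconstructs the standard proof from that reference. The three ingredients you use are exactly the right ones: (i) instantiating the infimum in \pref{asm:regression_oracle_graph} at $g^\star$, which is valid by the realizability in \pref{assum:graph}; (ii) the tower-rule computation showing the conditional expected excess log loss per term equals $\KL(\mathrm{Ber}(p)\,\|\,\mathrm{Ber}(q))$ with $p=g^\star(x_t,i,j)$ and $q=g_t(x_t,i,j)$ --- and your care about what one conditions on in Case~1 (that the index pairs in $S_t$ are determined by $i_t$ alone, with $i_t\sim p_t$ drawn independently of $E_t$ given the history and $x_t$) is precisely the point that makes the lemma's hypothesis applicable, matching how the paper invokes it in \pref{eqn:regressor_reg_graph} and \pref{eqn:fully_obs}; and (iii) the pointwise chain bounding the binary triangular discrimination by $4H^2$ via $(\sqrt{a}+\sqrt{b})^2\le 2(a+b)$ and then $2H^2\le \KL$ via Jensen, yielding the factor $2$ in the claim. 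The only cosmetic caveat worth recording: at boundary values one reads $\frac{(p-q)^2}{p+q}$ with the convention $0/0=0$ when $p=q=0$, and when $q\in\{0,1\}$ with $p\neq q$ the KL is $+\infty$, so the pointwise inequality holds trivially there; with those conventions your argument is complete.
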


\paragraph{Independence number.} 
It is known that for strongly observable graphs, their \emph{independence numbers} characterize the minimax regret~\citep{alon2015online}.
Specifically, an independence set of a directed graph is a subset of nodes in which no two distinct nodes are connected. The size of the largest independence set in graph $G$ is called its independence number, denoted by $\alpha(G)$. 
Since we consider stochastic graphs, we further define the independence number with respect to a $g\in\conv(\calG)$ and a context $x$ as:
\begin{align*}
    \alpha(g,x)\triangleq \inf_{q\in\calQ(g,x)}\E_{G\sim q}[\alpha(G)],
\end{align*}
where $\calQ(g,x)$ denotes the set of all distributions of strongly observable graphs whose expected edge connections are specified by $g(x, \cdot, \cdot)$ (that is, for any $q\in \calQ(g,x)$, we have $\E_{([K],E)\sim q}\left[\mathbbm{1}\{(i,j)\in E\}\right] = g(x,i,j)$ for all $(i,j)$).
With this notion, the difficulty of $G_t$ is then characterized by the independence number $\alpha_t\triangleq\alpha(g^\star,x_t)$.
In the more challenging partially revealed graph setting, however, our result depends on the worst-case independence number over the entire class $\calG$: $\alpha(\calG)\triangleq\sup_{g\in\calG, x\in\calX}\alpha(g,x)$. 
\section{Algorithms and Regret Guarantees}\label{sec:alg}

In this section, we introduce our algorithm \graphCBp and its regret guarantees. To describe our algorithm, we first briefly introduce the \graphCB algorithm of~\citet{zhang2023practical} for the informed setting: at each round $t$, given the loss estimator $f_t$ obtained from the regression oracle \AlgSq and the feedback graph $G_t$, \graphCB finds the action distribution $p_t\in\Delta([K])$ by solving $\argmin_{p}\dec_\gamma(p;{f}_t,G_t,x_t)$, where
the \emph{Decision-Estimation Coefficient} (DEC) is defined as:
    \begin{align}\label{eqn:minimax_p}
    &\dec_\gamma(p;{f}_t,G_t,x_t) \triangleq \nonumber
    \sup_{\substack{i^\star \in [K] \\ v^\star\in [0,1]^K}}
	 \Bigg[ \sum_{i=1}^Kp_iv^\star_i - v^\star_{i^\star}  \\
 &\qquad - \frac{\gamma}{4} \sum_{i=1}^K p_i\sum_{j=1}^K G_{t,i,j}\left(f_t(x_t,j) -v^\star_j\right)^2  \Bigg]
\end{align}
for some parameter $\gamma>0$,
and we abuse the notation by letting $G_t$ also represent its adjacent matrix.
The idea of DEC originates from~\citet{foster2020beyond} for contextual bandits and has become a general way to tackle interactive decision making problems since then~\citep{foster2021statistical}.
The first two terms within the supremum of \pref{eqn:minimax_p} is the instantaneous regret of strategy $p$ against the best action $i^\star$ with respect to a loss vector $v^\star$, and the third term corresponds to the expected squared loss between the loss predictor $f_t(x_t,\cdot)$ and the loss vector $v^\star$ on the observed actions (since each action $i$ is selected with probability $p_i$ and, conditioning on $i$ being selected, each action $j$ is observed with probability $G_{t,i,j}$). 
Because the true loss vector is unknown, a supremum over $v^\star$ is taken (that is, the worst case is considered).
The goal of the learner is to pick $p_t$ to minimize this DEC, since a small DEC means that the regret suffered by the learner is close to the regret of the regression oracle (which is assumed to be bounded).
After selecting an action $i_t \sim p_t$ and seeing the loss of actions connected $i_t$, \graphCB naturally feeds these observations to \AlgSq and proceeds to the next round.

The clear obstacle of running \graphCB in the uninformed setting is that $G_t$, required in \pref{eqn:minimax_p}, is unavailable at the beginning of round $t$. Therefore, we propose to learn the graphs simultaneously and simply use a predicted graph in place of the true graph $G_t$.
More concretely, at the beginning of round $t$, in addition to the loss estimator $f_t$, we also obtain a graph estimator $g_t$ from the graph regression oracle $\AlgLog$.
Then, to find $p_t$, we solve the same problem but with $G_t$ in \pref{eqn:minimax_p} replaced by the estimator $g_t(x_t, \cdot, \cdot)$; see~\pref{eqn:minimax_p_alg}.
After picking action $i_t \sim p_t$, the training of $\AlgSq$ remains the same, and additionally, we feed all the observations about the structure of $G_t$ to $\AlgLog$: for the partially reveal graph setting, the observations are the connections between $i_t$ and all $j\in A_t$ and the disconnections between $i_t$ and all $j\notin A_t$;
while for the fully reveal graph setting, the observations are all the connections in $E_t$ and the disconnections for all other action pairs.
See \pref{alg:squareCB.GPLUS} for the complete pseudocode.

While the idea of our algorithm appears to be very natural, its analysis is in fact highly non-trivial and reveals that learning the graphs using log loss is critical; see \pref{sec:analysis} for more discussions.
We also note that finding the minimizer of the DEC can be written as a simple convex program and solved efficiently; see more implementation details in \pref{app:implementation}.

We prove the following regret guarantee of \graphCBp in the partially revealed graph setting.
\begin{theorem}\label{thm:partial}
    Under Assumptions~\ref{assum:loss}-\ref{asm:regression_oracle_graph} and partially revealed feedback graphs, \graphCBp with $\gamma=\max\left\{4,\sqrt{\frac{\alpha(\calG)T\log(KT)}{\max\{\RegSq,\RegLogG\}}}\right\}$ guarantees:\footnote{The notation $\otil(\cdot)$  hides logarithmic dependence on $K$ and $T$.}
    \begin{align*}
        \RegCB=\otil\left(\sqrt{\alpha(\calG) T\max\{\RegSq,\RegLogG\}}\right).
    \end{align*}
\end{theorem}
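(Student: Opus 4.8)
The plan is to follow the Decision-Estimation-Coefficient (DEC) template while carefully tracking the gap between the predicted graph $g_t$ that the algorithm actually plugs into \eqref{eqn:minimax_p_alg} and the true expected graph $g^\star$. Writing the per-round regret as $\mathrm{reg}_t = \sum_{i} p_{t,i}f^\star(x_t,i) - \min_{i\in[K]}f^\star(x_t,i)$, the first step is to lower bound the supremum defining ${\dec}_{\gamma}(p_t;f_t,g_t,x_t)$ by evaluating it at the feasible point $v^\star=f^\star(x_t,\cdot)$, $i^\star=\argmin_i f^\star(x_t,i)$. Rearranging gives the pointwise bound
\[
\mathrm{reg}_t \le {\dec}_{\gamma}(p_t;f_t,g_t,x_t) + \frac{\gamma}{4}P_t^{g_t}, \qquad P_t^{g}:=\sum_{i}p_{t,i}\sum_{j}g(x_t,i,j)\big(f_t(x_t,j)-f^\star(x_t,j)\big)^2.
\]
Summing over $t$ and taking expectations reduces the theorem to controlling (i) the cumulative DEC value $\E[\sum_t {\dec}_{\gamma}(p_t;f_t,g_t,x_t)]$ and (ii) the cumulative penalty $\frac{\gamma}{4}\E[\sum_t P_t^{g_t}]$.

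For (i), since $p_t$ is by construction the minimizer of ${\dec}_{\gamma}(\cdot;f_t,g_t,x_t)$, I would invoke a stochastic-graph version of the minimax DEC bound of \citet{zhang2023practical}: for strongly observable graphs and $\gamma\ge 4$, $\min_p{\dec}_{\gamma}(p;f,g,x)=\otil(\alpha(g,x)/\gamma)$. Since the DEC is linear in the edge weights and its inner objective is convex in $v^\star$, the fractional-graph value is dominated by an expectation over any $q\in\calQ(g,x)$, which reduces it to the deterministic-graph bound and yields the independence-number dependence. Because the predicted graph $g_t$ need not be close to $g^\star$ in the partially revealed setting, I can only bound $\alpha(g_t,x_t)\le\alpha(\calG)$; this is exactly where the worst-case $\alpha(\calG)$ (rather than the instance-dependent $\alpha_t$) enters, and summing gives $\otil(\alpha(\calG)T/\gamma)$.

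The hard part is (ii), because the penalty $P_t^{g_t}$ is weighted by the predicted graph whereas each loss $\ell_{t,j}$ is actually revealed with probability $g^\star(x_t,i_t,j)$ (recall $\E[\mathbbm{1}\{(i,j)\in E_t\}\mid x_t]=g^\star(x_t,i,j)$). I would split $g_t=g^\star+(g_t-g^\star)$. The $g^\star$-weighted part $P_t^{g^\star}$ is precisely what the squared-loss oracle controls: because action $j$'s loss is observed exactly when $(i_t,j)\in E_t$, a standard bias-variance identity under \pref{assum:loss} converts \pref{asm:regression_oracle_loss} into $\E[\sum_t P_t^{g^\star}]=\otil(\RegSq)$. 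The delicate term is the graph mismatch $\sum_{i}p_{t,i}\sum_{j}(g_t-g^\star)(f_t-f^\star)^2$, and this is where log loss is indispensable. Naively bounding it by $|g_t-g^\star|$ and applying Cauchy-Schwarz against \pref{lem:log-sq-convert} would cost a factor $\sqrt{\sum_j 1}=\sqrt{K}$, which is fatal. Instead I would apply the weighted AM-GM inequality
\[
|g_t-g^\star|\,(f_t-f^\star)^2 \le \frac{1}{2\eta}\cdot\frac{(g_t-g^\star)^2}{g_t+g^\star} + \frac{\eta}{2}\,(g_t+g^\star)(f_t-f^\star)^2,
\]
using $(f_t-f^\star)^4\le(f_t-f^\star)^2$. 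The first term is exactly the triangular discrimination that \pref{lem:log-sq-convert} bounds by $2\RegLogG$ in expectation (in the partial setting $S_t$ only contains the sampled row $i_t$, so averaging over $i_t\sim p_t$ recovers the $p_t$-weighted sum), while the second term carries an edge-probability weight $(g_t+g^\star)$ and is therefore of the same self-bounding form as $P_t^{g_t}+P_t^{g^\star}$. Taking $\eta$ a small constant and solving the resulting self-bounding inequality (e.g. $P_t^{g_t}\le 3P_t^{g^\star}+\mathrm{TD}_t$ with $\mathrm{TD}_t:=\sum_i p_{t,i}\sum_j\frac{(g_t-g^\star)^2}{g_t+g^\star}$) bounds the entire term (ii) by $\otil(\gamma\max\{\RegSq,\RegLogG\})$ with no $K$ factor. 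Had the oracle used squared loss, the numerator would lack the $g_t+g^\star$ denominator, the AM-GM would leave a bare $(f_t-f^\star)^2$ summed over all $j$, and the $K$ factor would reappear --- this is precisely why $\AlgLog$ must regress with log loss.

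Combining (i) and (ii) gives $\RegCB=\otil\big(\alpha(\calG)T/\gamma+\gamma\max\{\RegSq,\RegLogG\}\big)$, and tuning $\gamma=\max\{4,\sqrt{\alpha(\calG)T\log(KT)/\max\{\RegSq,\RegLogG\}}\}$ balances the two terms to produce the claimed $\otil(\sqrt{\alpha(\calG)T\max\{\RegSq,\RegLogG\}})$. What remains is routine: verifying the bias-variance reduction for the squared-loss oracle, discharging the conditional expectations against the environment's adaptive choice of $(\ell_t,G_t)$ through an appropriate filtration, and checking the constant-$\gamma$ regime of the minimax DEC bound.
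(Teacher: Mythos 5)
Your proposal is correct and is essentially the paper's own proof in different packaging: your per-round bound $\mathrm{reg}_t \le \dec_\gamma(p_t;f_t,g_t,x_t) + \tfrac{\gamma}{4}P_t^{g_t}$ together with the self-bounding step $P_t^{g_t}\le 3P_t^{g^\star}+\mathrm{TD}_t$ via weighted AM-GM is exactly the combination of \pref{thm:decbound} and \pref{lem:dec_translation}, whose key inequality \pref{lem:z'2z} ($3z'\ge z-\tfrac{(z-z')^2}{z+z'}$) is your AM-GM at $\eta=1$, and your use of the log-loss oracle to control the $p_t$-weighted triangular discrimination in the partial-feedback setting is precisely the paper's application of \pref{lem:log-sq-convert}. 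The remaining ingredients---the stochastic-graph DEC value bound $\min_p \dec_\gamma = \otil(\alpha(g,x)\log(K\gamma)/\gamma)$ via Jensen's inequality (\pref{lem:main-dec-lemma}), the comparison $\alpha(g_t,x_t)\le\alpha(\calG)$ (\pref{lem:ind_trans}, which does require a short convexity argument since $g_t\in\conv(\calG)$), and the tuning of $\gamma$---coincide with the paper's, down to the constants $\tfrac{3}{4}\gamma\RegSq + \tfrac{1}{2}\gamma\RegLogG$.
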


Since $\RegSq$ and $\RegLogG$ are both sublinear in $T$, this regret bound is also sublinear in $T$.
More importantly, it has no polynomial dependence on the total number of actions $K$, and instead only depends on the worst-case independence number $\alpha(\calG) \leq K$.
There are indeed important applications where the independence number of every encountered feedback graph must be small or even independent of $K$ (e.g., the inventory control problem discussed in~\citet{zhang2023practical} or the bidding application in \pref{sec:experiment}), in which case it only makes sense to pick $\calG$ such that $\alpha(\calG)$ is also small.
While \graphCBp requires setting $\gamma$ with the knowledge of $\alpha(\calG)$, in~\pref{app:parameter_free}, we show that applying certain doubling trick on the DEC value leads to the same regret bound even \emph{without} the knowledge of $\alpha(\calG)$.

However, it would be even better if instead of paying $\alpha(\calG)$ every round, we only pay the independence number of the corresponding stochastic feedback graph at each round $t$, that is, $\alpha_t$.
While it is unclear to us whether this is achievable with partially revealed graphs, in the next theorem, we show that \graphCBp indeed achieves this in the easier fully revealed graph setting.

\begin{theorem}\label{thm:fully}
    Under Assumptions~\ref{assum:loss}-\ref{asm:regression_oracle_graph} and fully revealed feedback graphs, \graphCBp with $\gamma=\max\left\{12,\sqrt{\frac{\sum_{t=1}^T\alpha_t}{\max\{\RegSq,\RegLogGF\}}}\right\}$ guarantees:
    \begin{align*}
        \RegCB=\otil\left(\sqrt{\sum_{t=1}^T\alpha_t\max\{\RegSq,\RegLogGF\}}\right).
    \end{align*}
\end{theorem}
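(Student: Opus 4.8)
The plan is to follow the Decision-Estimation-Coefficient (DEC) template and essentially parallel the proof of \pref{thm:partial}, sharpening only the step that controls the exploration cost so that it scales with the per-round $\alpha_t=\alpha(g^\star,x_t)$ rather than the worst-case $\alpha(\calG)$. First I would set up the standard regret-to-DEC decomposition: writing $i^\star_t=\argmin_i f^\star(x_t,i)$ and plugging the feasible choice $v^\star=f^\star(x_t,\cdot),\ i^\star=i^\star_t$ into the supremum defining $\dec_\gamma(p_t;f_t,g_t,x_t)$ in \pref{eqn:minimax_p_alg}, the penalty term vanishes there and one obtains, for every round,
\[
\sum_i p_{t,i}f^\star(x_t,i)-f^\star(x_t,i^\star_t)\ \le\ \dec_\gamma(p_t;f_t,g_t,x_t)+\tfrac{\gamma}{4}\sum_i p_{t,i}\sum_j g_t(x_t,i,j)\big(f_t(x_t,j)-f^\star(x_t,j)\big)^2.
\]
Summing over $t$ and taking expectations reduces the theorem to bounding (i) the accumulated DEC values and (ii) the accumulated penalty.

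For the penalty (ii) I would split $g_t=g^\star+(g_t-g^\star)$. The $g^\star$-part is exactly the probability that $j$ is observed when $i_t\sim p_t$, so $\E[\sum_t\sum_i p_{t,i}\sum_j g^\star(x_t,i,j)(f_t-f^\star)^2]=\E[\sum_t\sum_{j\in A_t}(f_t-f^\star)^2]\le\RegSq$ by realizability (\pref{assum:loss}) and \pref{asm:regression_oracle_loss}. The $(g_t-g^\star)$-part is where \emph{log loss is essential}: applying AM--GM as $(g_t-g^\star)(f_t-f^\star)^2\le\tfrac12\tfrac{(g_t-g^\star)^2}{g_t+g^\star}+\tfrac12(g_t+g^\star)(f_t-f^\star)^2$ (using $(f_t-f^\star)^4\le(f_t-f^\star)^2$), the first summand accumulates---over \emph{all} pairs, since fully revealed feedback lets $\AlgLog$ see every $(i,j)$---to at most $2\RegLogGF$ by \pref{lem:log-sq-convert}, while the second reproduces a copy of the penalty and a copy of the $\RegSq$-term, both of which are reabsorbed. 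This gives $\E[\sum_t\mathrm{penalty}_t]=O(\RegSq+\RegLogGF)$, so (ii) contributes $O(\gamma\max\{\RegSq,\RegLogGF\})$. The point is that the graph error enters only through the triangular discrimination $\tfrac{(g_t-g^\star)^2}{g_t+g^\star}$; a squared-loss oracle would force an $\ell_1$ error $\sum_{i,j}|g_t-g^\star|$ whose Cauchy--Schwarz bound carries a fatal $\sqrt{K}$ factor.

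For the DEC values (i) I would first bound $\dec_\gamma(p_t;f_t,g_t,x_t)=\min_p\dec_\gamma(p;f_t,g_t,x_t)$ by a fractional strongly-observable DEC bound---the natural extension of the informed-graph analysis of \citet{zhang2023practical}---using that the penalty is linear in the graph, so $\dec_\gamma(p;f,g,x)\le\E_{G\sim q}[\dec_\gamma(p;f,G,x)]$ for any $q\in\calQ(g,x)$, which yields $\min_p\dec_\gamma(p;f_t,g_t,x_t)=\otil(\alpha(g_t,x_t)/\gamma)$. In \pref{thm:partial} one simply upper bounds $\alpha(g_t,x_t)\le\alpha(\calG)$ because $g_t\in\conv(\calG)$. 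Here I instead need $\alpha_t=\alpha(g^\star,x_t)$, and this is the main obstacle: the independence number is a combinatorial, non-Lipschitz function of the fractional edge weights, so a naive ``$\alpha(g_t,x_t)\approx\alpha(g^\star,x_t)$'' argument (perturbing the optimal $q^\star\in\calQ(g^\star,x_t)$ to match the marginals $g_t$ and charging each edge flip at most one unit of $\alpha$) costs $\sum_{i,j}|g_t-g^\star|$ and reintroduces $\mathrm{poly}(K)$. The resolution must again route the discrepancy through the triangular discrimination: exploiting that in the fully revealed setting \pref{lem:log-sq-convert} controls $\E[\sum_t\sum_{i,j}\tfrac{(g_t-g^\star)^2}{g_t+g^\star}]\le2\RegLogGF$ \emph{uniformly over all rows} (not merely the $p_t$-weighted rows available in the partial setting), I would aim to show $\E[\sum_t\alpha(g_t,x_t)]\le\E[\sum_t\alpha_t]+O(\RegLogGF)$ by charging each pair's marginal contribution to $\alpha$ against its triangular-discrimination mass rather than its $\ell_1$ mass. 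Making this charging argument rigorous---in particular preserving strong observability under the perturbation and ruling out that the many small edge perturbations accumulate---is exactly where I expect the bulk of the technical work to lie, and is the precise place the full graph revelation is used.

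Combining (i) and (ii), I would obtain $\RegCB\le\otil(\tfrac1\gamma)\,\E[\sum_t\alpha_t]+O(\gamma\max\{\RegSq,\RegLogGF\})$ up to lower-order terms, and the stated $\gamma=\max\{12,\sqrt{\sum_t\alpha_t/\max\{\RegSq,\RegLogGF\}}\}$ balances the two dominant terms to give $\otil(\sqrt{\sum_t\alpha_t\max\{\RegSq,\RegLogGF\}})$. Since $\sum_t\alpha_t$ is unknown when setting $\gamma$, I would finally note (as for \pref{thm:partial}) that a doubling scheme on the realized DEC values removes the need to know it while preserving the rate.
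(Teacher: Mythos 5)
Your overall skeleton is sound, and your part (ii) is essentially the paper's argument in disguise: splitting the penalty via $g_t=g^\star+(g_t-g^\star)$ and using the AM--GM step $(g_t-g^\star)(f_t-f^\star)^2\le\tfrac12\tfrac{(g_t-g^\star)^2}{g_t+g^\star}+\tfrac12(g_t+g^\star)(f_t-f^\star)^2$ reproduces exactly what the paper gets from \pref{thm:decbound} combined with \pref{lem:dec_translation} (same inequality, \pref{lem:z'2z}, same $\tfrac34\gamma\RegSq+\Theta(\gamma)\RegLogG$ accounting, with the fully revealed feedback letting \pref{lem:log-sq-convert} absorb the all-pairs triangular discrimination). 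The problem is part (i), which is the theorem's actual crux, and there your proposal has a genuine gap: you reduce to the unproven claim $\E[\sum_t\alpha(g_t,x_t)]\le\E[\sum_t\alpha_t]+O(\RegLogGF)$ and explicitly defer the ``charging argument'' that would establish it. No such comparison of independence numbers appears in the paper, and for good reason: $\alpha(\cdot,x)$ is an infimum over coupling distributions of strongly observable graphs and is not additively controlled by triangular discrimination --- small per-edge perturbations can change which low-$\alpha$ couplings are feasible in $\calQ(g,x)$ discontinuously (indeed whether $\calQ(g_t,x)$ even contains favorable couplings is a combinatorial property of the marginals, not a metric one). So as stated, your part (i) is a conjecture sitting exactly where the proof needs to be.

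The paper's resolution sidesteps independence numbers entirely by switching the graph \emph{inside the DEC} rather than after it. Its Lemma 4.5 (\pref{lem:main-dec-lemma-full}) applies the same scalar inequality \pref{lem:z'2z}, this time with $z'=g'(x,i,j)(f(x,j)-v^\star_j)^2$ and $z=g(x,i,j)(f(x,j)-v^\star_j)^2$, to get for all $p$
\begin{align*}
\min_{p}\dec_{\gamma}(p;f_t,g_t,x_t)\le \min_{p}\dec_{\gamma/3}(p;f_t,g^\star,x_t)+\frac{\gamma}{12}\sum_{i,j}\frac{(g_t(x_t,i,j)-g^\star(x_t,i,j))^2}{g_t(x_t,i,j)+g^\star(x_t,i,j)},
\end{align*}
i.e., the DEC is ``Lipschitz'' in the graph up to a constant-factor change in $\gamma$ plus a triangular-discrimination term; only \emph{then} is the independence-number bound \pref{lem:main-dec-lemma} invoked, at $g^\star$, which yields $\alpha_t$ directly, while the accumulated discrimination terms are $O(\gamma\RegLogGF)$ by \pref{lem:log-sq-convert} (this is where full revelation is used, since the sum is over all rows, not the $p_t$-weighted ones). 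You correctly identified both the obstacle (non-Lipschitzness of $\alpha$) and the resource (all-pairs triangular discrimination), but routed them through a combinatorial comparison that is missing and likely false as stated; the fix is to exploit the linearity of the DEC's penalty in the graph and apply your own AM--GM trick one more time, before ever converting DEC values into independence numbers.
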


In other words, we replace the $\alpha(\calG)T$ term in the regret with the smaller and more adaptive quantity $\sum_{t=1}^T \alpha_t$, indicating that the complexity of learning only depends on how difficult each encountered graph is, but not the worst case difficulty among all the possible graphs in $\calG$.
\section{Analysis}\label{sec:analysis}

In this section, we provide some key steps of our analysis, highlighting 1) why it is enough to replace the true graph in \pref{eqn:minimax_p} with the graph estimator $g_t$;
2) why using log loss in the graph regression oracle is important; and
3) why having fully revealed graphs helps improve the dependence from $\alpha(\calG)$ to $\alpha_t$.

\subsection{Analysis for Partially Revealed Graphs}\label{sec:analysis_partial}

While we present the DEC in \pref{eqn:minimax_p_alg} as a natural modification of \pref{eqn:minimax_p} in the absence of the true graph,
it in fact can be rigorously derived as an upper bound on another DEC more tailored to our original problem.
Specifically, we define for two parameters $\gamma_1, \gamma_2 > 0$:
\begin{align}\label{eqn:dec_graph_partial}
    &\decp_{\gamma_1,\gamma_2}(p;f,g,x) 
    \triangleq \sup_{\substack{i^\star \in [K], v^\star\in [0,1]^K \nonumber \\ M^\star\in [0,1]^{K\times K}}}
	\left[ \sum_{i=1}^Kp_iv^\star_i - v^\star_{i^\star} \right. \nonumber\\
 &\qquad -\gamma_1\sum_{i=1}^K p_i \sum_{j=1}^K M_{i,j}^\star\paren{f(x,j) - v^\star_j}^2 \nonumber \\
 &\qquad \left.- \gamma_2\sum_{i=1}^K p_i \sum_{j=1}^K \frac{(M_{i,j}^\star-g(x,i,j))^2}{M_{i,j}^\star+g(x,i,j)}
 \right].
\end{align}
Similar to \pref{eqn:minimax_p}, the first two terms within the supermum represent the instantaneous regret of strategy $p$ against the best action $i^\star$ with respect to a loss vector $v^\star$.
The third term is also similar and represents the squared loss of \AlgSq under strategy $p$, but since the true graph $G_t$ is unknown, it is replaced with the worst-case adjacent matrix $M^\star$ (hence the supremum over $M^\star$).
Finally, the last additional term is the expected triangular discrimination between $M^\star_{i,\cdot}$ and $g(x,i,\cdot)$ when $i$ is sampled from $p$, which, according to \pref{lem:log-sq-convert}, represents the log loss regret of $\AlgLog$.

Once again, the idea is that if for every round $t$, we can find a strategy $p_t$ with a small DEC value $\decp_{\gamma_1,\gamma_2}(p_t;f_t,g_t,x_t)$, then the learner's overall regret $\RegCB$ will be close to the square loss regret $\RegSq$ of $\AlgSq$ plus the log loss regret $\RegLogG$ of $\AlgLog$, both of which are assumed to be reasonably small. This is formally stated below. 
\begin{restatable}{theorem}{decbound}
\label{thm:decbound}
Under Assumptions~\ref{assum:loss}-\ref{asm:regression_oracle_graph}, for any $\gamma_1, \gamma_2 \geq 0$, the regret $\RegCB$ of \graphCBp is at most
\[\E\left[\sum_{t=1}^T \decp_{\gamma_1,\gamma_2}(p_t;f_t,g_t,x_t)\right] + \gamma_1 \RegSq + 2\gamma_2 \RegLogG.
\]
\end{restatable}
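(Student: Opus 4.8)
The plan is to bound the per-round expected instantaneous regret by a single evaluation of $\decp_{\gamma_1,\gamma_2}$ plus two estimation-error terms, and then charge those errors to the two oracle regrets. Fix a round $t$ and condition on the history through $x_t$, so that $f_t,g_t,p_t$ are determined. Writing $i^\star_t\in\argmin_i f^\star(x_t,i)$, the conditional expected instantaneous regret $\sum_i p_{t,i}f^\star(x_t,i)-f^\star(x_t,i^\star_t)$ is precisely the value of the first two terms inside the supremum of \pref{eqn:dec_graph_partial} evaluated at the ``oracle triple'' $i^\star=i^\star_t$, $v^\star=f^\star(x_t,\cdot)$, and $M^\star=g^\star(x_t,\cdot,\cdot)$. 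The latter is a feasible choice since $g^\star(x_t,\cdot,\cdot)\in[0,1]^{K\times K}$. Because $\decp_{\gamma_1,\gamma_2}(p_t;f_t,g_t,x_t)$ is the supremum over all such triples, substituting this particular one and rearranging gives
\begin{align*}
\sum_i p_{t,i}f^\star(x_t,i) - f^\star(x_t,i^\star_t) &\leq \decp_{\gamma_1,\gamma_2}(p_t;f_t,g_t,x_t) \\
&\quad + \gamma_1 \underbrace{\sum_i p_{t,i}\sum_j g^\star(x_t,i,j)\big(f_t(x_t,j)-f^\star(x_t,j)\big)^2}_{=:E_t^{\mathrm{sq}}} \\
&\quad + \gamma_2 \underbrace{\sum_i p_{t,i}\sum_j \frac{(g^\star(x_t,i,j)-g_t(x_t,i,j))^2}{g^\star(x_t,i,j)+g_t(x_t,i,j)}}_{=:E_t^{\mathrm{td}}}.
\end{align*}
Summing over $t$ and taking total expectation yields $\RegCB\le\E[\sum_t\decp_{\gamma_1,\gamma_2}(p_t;f_t,g_t,x_t)]+\gamma_1\E[\sum_t E_t^{\mathrm{sq}}]+\gamma_2\E[\sum_t E_t^{\mathrm{td}}]$, so it remains to show $\E[\sum_t E_t^{\mathrm{sq}}]\le\RegSq$ and $\E[\sum_t E_t^{\mathrm{td}}]\le 2\RegLogG$.

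For the graph term, I would note that in the partially revealed setting the set fed to $\AlgLog$ is $S_t=\{(i_t,j,\mathbbm{1}\{j\in A_t\})\}_{j\in[K]}$, all of whose rows use $i=i_t$, so that $\sum_{(i,j,b)\in S_t}\frac{(g_t(x_t,i,j)-g^\star(x_t,i,j))^2}{g_t(x_t,i,j)+g^\star(x_t,i,j)}=\sum_j\frac{(g_t(x_t,i_t,j)-g^\star(x_t,i_t,j))^2}{g_t(x_t,i_t,j)+g^\star(x_t,i_t,j)}$. Since $i_t\sim p_t$ is drawn independently of $G_t$ given $x_t$, averaging over $i_t$ recovers exactly $E_t^{\mathrm{td}}$, and since \pref{assum:graph} gives $\E[b\mid x_t,i_t]=g^\star(x_t,i_t,j)$ for every tuple in $S_t$, \pref{lem:log-sq-convert} applies and delivers $\E[\sum_t E_t^{\mathrm{td}}]\le 2\RegLogG$. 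This is exactly the step that forces log loss: the extra triangular-discrimination penalty built into $\decp_{\gamma_1,\gamma_2}$ is charged to the log-loss oracle regret through \pref{lem:log-sq-convert}, whereas a squared-loss oracle would not control this quantity.

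For the squared-loss term, I would invoke \pref{asm:regression_oracle_loss} with the comparator $f^\star$ on the realized sequence $S_t=\{(j,\ell_{t,j})\}_{j\in A_t}$, which gives the deterministic inequality $\sum_t\sum_{j\in A_t}\big((f_t(x_t,j)-\ell_{t,j})^2-(f^\star(x_t,j)-\ell_{t,j})^2\big)\le\RegSq$. Taking expectations and expanding the square, the cross term $2(f_t-f^\star)(f^\star-\ell_{t,j})$ summed over $j\in A_t$ vanishes in expectation by mean-loss realizability (\pref{assum:loss}), using that $i_t$ is sampled from $p_t$ independently of the environment's $(\ell_t,G_t)$; what is left is $\E[\sum_t\sum_{j\in A_t}(f_t-f^\star)^2]$, and averaging the observation indicators $\mathbbm{1}\{(i_t,j)\in E_t\}$ against $p_t$ and their means $g^\star(x_t,i,j)$ turns this into exactly $\E[\sum_t E_t^{\mathrm{sq}}]$. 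Hence $\E[\sum_t E_t^{\mathrm{sq}}]\le\RegSq$, and combining the three bounds gives the claimed decomposition.

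The conceptual crux, rather than the algebra, is the choice $M^\star=g^\star(x_t,\cdot,\cdot)$: it simultaneously reweights the squared-loss penalty by the \emph{true} observation probabilities so that it matches the feedback actually collected under $p_t$, and produces a triangular-discrimination penalty controllable by the log-loss oracle. I expect the main obstacle to be the measure-theoretic bookkeeping in the two estimation-error reductions---verifying that the observability indicators average to $g^\star$ and that the loss-noise cross terms have zero conditional expectation (which is where the precise conditioning of realizability and the independence of $i_t$ from $(\ell_t,G_t)$ given $x_t$ must be used carefully)---so that the empirically gathered feedback reproduces the population quantities $E_t^{\mathrm{sq}}$ and $E_t^{\mathrm{td}}$ that appear in the DEC. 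Note that this theorem only establishes the decomposition; bounding $\sum_t\decp_{\gamma_1,\gamma_2}(p_t;\cdot)$ itself and tuning $\gamma_1,\gamma_2$ is deferred to the proof of \pref{thm:partial}.
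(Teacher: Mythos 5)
Your proposal is correct and follows essentially the same route as the paper's proof: the same feasible triple $(i^\star, v^\star, M^\star) = (\pi^\star(x_t), f^\star(x_t,\cdot), g^\star(x_t,\cdot,\cdot))$ substituted into the supremum defining $\decp_{\gamma_1,\gamma_2}$, the same cancellation of the cross term via loss realizability to charge $\gamma_1 E_t^{\mathrm{sq}}$ to $\RegSq$, and the same application of \pref{lem:log-sq-convert} (averaging the rows $(i_t,j)$ over $i_t\sim p_t$) to charge $\gamma_2 E_t^{\mathrm{td}}$ to $2\RegLogG$. The only piece you omit is the fully revealed case (Case 2 of the algorithm), which the paper dispatches in one line by bounding $p_{t,i}\leq 1$ so that $E_t^{\mathrm{td}}$ is dominated by the full double sum over all $(i,j)$ pairs fed to \AlgLog, again controlled by \pref{lem:log-sq-convert}.
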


Now, instead of directly minimizing the DEC to find the strategy $p_t$ (which is analytically complicated), the following lemma shows that the easier form of \pref{eqn:minimax_p_alg} serves as an upper bound of \pref{eqn:dec_graph_partial}, further explaining our algorithm design.

\begin{restatable}{lemma}{decTranslation}
\label{lem:dec_translation}
    For any $p\in\Delta([K])$, $g\in \conv(\calG)$, $f\in\conv(\calF)$, and $x\in \calX$, we have 
    \[
    \decp_{\frac{3}{4}\gamma,\frac{1}{4}\gamma}(p;f,g,x)\leq \dec_{\gamma}(p;f,g,x).
    \]
\end{restatable}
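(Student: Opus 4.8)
The plan is to prove the inequality by fixing the supremum variables and then reducing to a scalar estimate. Note first that both $\decp_{\frac{3}{4}\gamma,\frac{1}{4}\gamma}(p;f,g,x)$ and $\dec_{\gamma}(p;f,g,x)$ are suprema whose first two terms, $\sum_{i}p_iv^\star_i - v^\star_{i^\star}$, are identical and depend only on $(i^\star,v^\star)$. Hence it suffices to take an arbitrary triple $(i^\star,v^\star,M^\star)$ and bound the objective of $\decp$ at this triple by the objective of $\dec_{\gamma}$ evaluated at the \emph{same} $(i^\star,v^\star)$; since the latter is dominated by the supremum over $(i^\star,v^\star)$ defining $\dec_{\gamma}$, taking the supremum over $(i^\star,v^\star,M^\star)$ on the left then yields the claim. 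After the common first two terms cancel, multiplying through by $-\tfrac{4}{\gamma}$ and flipping the inequality, this reduces to showing that for every $M^\star\in[0,1]^{K\times K}$,
\begin{align*}
&3\sum_{i=1}^K p_i \sum_{j=1}^K M^\star_{i,j}\paren{f(x,j)-v^\star_j}^2 + \sum_{i=1}^K p_i \sum_{j=1}^K \frac{(M^\star_{i,j}-g(x,i,j))^2}{M^\star_{i,j}+g(x,i,j)} \\
&\qquad \ge \sum_{i=1}^K p_i \sum_{j=1}^K g(x,i,j)\paren{f(x,j)-v^\star_j}^2.
\end{align*}

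Next I would reduce this to a single scalar inequality. Because every coefficient $p_i$ is nonnegative and the three double sums share the same index structure, it suffices to establish the inequality entrywise for each fixed pair $(i,j)$. Writing $a=M^\star_{i,j}\in[0,1]$, $b=g(x,i,j)\in[0,1]$, and $c=\paren{f(x,j)-v^\star_j}^2$, and noting that $c\in[0,1]$ because $f(x,j),v^\star_j\in[0,1]$ (here realizability of the losses into $[0,1]$ is used), the target becomes the scalar claim
\begin{align*}
3ac + \frac{(a-b)^2}{a+b} \ge bc, \qquad a,b,c\in[0,1],
\end{align*}
with the convention that the middle term equals $0$ when $a=b=0$.

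Finally I would verify this scalar inequality by a short case analysis. Rewrite it as $\frac{(a-b)^2}{a+b}\ge (b-3a)c$. If $b\le 3a$, the right-hand side is nonpositive while the left-hand side is nonnegative, so the inequality is immediate. If $b>3a$, then since $c\le 1$ and $b-3a>0$ it suffices to show $\frac{(a-b)^2}{a+b}\ge b-3a$, i.e.\ $(a-b)^2\ge (b-3a)(a+b)$; expanding both sides gives the clean identity $(a-b)^2-(b-3a)(a+b)=4a^2\ge 0$, which completes the argument.

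The only genuinely delicate point is the supremum-reduction step at the start: one must observe that the extra variable $M^\star$ appears only in the two graph-related terms and that the derived bound holds \emph{uniformly} over $M^\star$, so that worst-casing over $M^\star$ on the left never exceeds the $M^\star$-free right-hand side. Everything after that is a nonnegative-combination argument together with the elementary scalar estimate, whose crux is the algebraic identity $(a-b)^2-(b-3a)(a+b)=4a^2$ combined with the boundedness $c\le 1$.
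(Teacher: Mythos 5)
Your proposal is correct and takes essentially the same route as the paper: after the identical supremum-reduction over $(i^\star,v^\star,M^\star)$, your entrywise scalar claim $3ac+\frac{(a-b)^2}{a+b}\ge bc$ with $c=\paren{f(x,j)-v^\star_j}^2\le 1$ is exactly the paper's \pref{lem:z'2z} applied with $z'=M^\star_{i,j}c$ and $z=g(x,i,j)c$, followed by dropping the factor $c$ on the discrimination term. Your identity $(a-b)^2-(b-3a)(a+b)=4a^2$ is the same algebraic content the paper packages as an AM--GM step, so there are no gaps and nothing genuinely new to report.
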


To prove this lemma, we need to connect the squared loss with respect to $M^\star$ and that with respect to $g$.
We achieve so using the following lemma, which also reveals why triangular discrimination naturally comes out.

\begin{lemma}\label{lem:z'2z}
    For any $z,z' > 0$, the following holds: \[3z' \geq z - \frac{(z-z')^2}{z+z'}.\]
\end{lemma}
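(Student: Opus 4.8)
The plan is to reduce the claimed inequality to a manifestly nonnegative quantity by elementary algebra, exploiting that the denominator $z+z'$ is strictly positive under the hypothesis. First I would observe that since $z,z'>0$, we have $z+z'>0$, so multiplying both sides of $3z' \geq z - \frac{(z-z')^2}{z+z'}$ by $z+z'$ is an \emph{equivalence} that preserves the direction of the inequality. This turns the goal into the polynomial inequality $3z'(z+z') \geq z(z+z') - (z-z')^2$, in which the only nontrivial feature, the denominator, has been eliminated.

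Next I would expand both sides. The right-hand side is $z(z+z') - (z-z')^2 = z^2 + zz' - (z^2 - 2zz' + z'^2) = 3zz' - z'^2$, while the left-hand side is simply $3zz' + 3z'^2$. Subtracting the common term $3zz'$ from both sides, the inequality collapses to $3z'^2 \geq -z'^2$, i.e.\ to $4z'^2 \geq 0$, which holds trivially. Reading the chain of equivalences backwards then yields the lemma.

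There is essentially no obstacle here; the statement is a one-line computation once the denominator is cleared. The only point requiring any care is the sign of the factor $z+z'$ by which we multiply: it must be positive for the multiplication to preserve the inequality, and this is exactly what $z,z'>0$ guarantees (indeed $z,z'\geq 0$ with not both zero would already suffice). I would flag this explicitly in the write-up so that the reduction to $4z'^2\geq 0$ is rigorous rather than merely formal.
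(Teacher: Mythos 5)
Your proof is correct: clearing the positive denominator $z+z'$ and expanding reduces the claim to $4z'^2 \geq 0$, exactly as you argue, and your remark about the sign of $z+z'$ is the right point of care. The paper's proof instead applies the AM--GM inequality to $(z+z')$ and $\frac{(z-z')^2}{z+z'}$ to obtain $2(z-z') \leq (z+z') + \frac{(z-z')^2}{z+z'}$ and rearranges --- which, once the denominator is cleared, is precisely your inequality $4z'^2 \geq 0$, so the two arguments are essentially the same.
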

\begin{proof}
    By AM-GM inequality, we have
    \[
        2(z-z') \leq (z+z') + \frac{(z-z')^2}{z+z'}.
    \]
    Rearranging then finishes the proof.
\end{proof}

\begin{proof}[Proof of Lemma~\ref{lem:dec_translation}]
Using \pref{lem:z'2z} with $z' = M_{i,j}^\star\paren{f(x,j) - v^\star_j}^2$ and $z = g(x,i,j)\paren{f(x,j) - v^\star_j}^2$, we have
\begin{align*}
&3M_{i,j}^\star\paren{f(x,j) - v^\star_j}^2
\geq g(x,i,j)\paren{f(x,j) - v^\star_j}^2 \\
&\qquad - \frac{(M_{i,j}^\star-g(x,i,j))^2}{M_{i,j}^\star+g(x,i,j)}\paren{f(x,j) - v^\star_j}^2 \\
&\geq g(x,i,j)\paren{f(x,j) - v^\star_j}^2  - \frac{(M_{i,j}^\star-g(x,i,j))^2}{M_{i,j}^\star+g(x,i,j)},
\end{align*}
where the last step uses the fact $\paren{f(x,j) - v^\star_j}^2 \leq 1$.
Applying this inequality to the definition of \pref{eqn:dec_graph_partial} and plugging in the parameters $\gamma_1 = \frac{3}{4}\gamma$ and $\gamma_2 = \frac{1}{4}\gamma$, we see that the two triangular discrimination terms cancel, leading us to exactly \pref{eqn:minimax_p_alg}.
\end{proof}
The importance of using log loss when learning the feedback graphs now becomes clear: the log loss regret turns out to be exactly the price one needs to pay by pretending that the graph estimator is the true graph.

The last step of the analysis is to show that the minimum DEC value, which our final strategy $p_t$ achieves, is reasonably small and related to some independence number with no polynomial dependence on the total number of actions:
\begin{restatable}{lemma}{decgamma}\label{lem:main-dec-lemma}
    For any $g \in \conv(\calG)$, $f\in\conv(\calF)$, $x\in \calX$, and $\gamma\geq 4$, we have
    \begin{align*}
\min_{p\in\Delta([K])}{\dec}_{\gamma}(p;f,g,x) = \order\left(\frac{\alpha(g,x)\log(K\gamma)}{\gamma}\right).
    \end{align*}
\end{restatable}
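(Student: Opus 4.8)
The plan is to evaluate the inner supremum of $\dec_\gamma(p;f,g,x)$ in closed form and then exhibit a single distribution $p$ that makes the result small. Write $w_j \triangleq \sum_{i=1}^K p_i\,g(x,i,j)$ for the \emph{observation probability} of action $j$ and abbreviate $f_j \equiv f(x,j)$. The penalty $\frac{\gamma}{4}\sum_i p_i\sum_j g(x,i,j)(f_j-v^\star_j)^2$ decouples across coordinates as $\frac{\gamma}{4}\sum_j w_j(f_j-v^\star_j)^2$, while the linear part of the objective is $\sum_j (p_j-\mathbbm{1}\{j=i^\star\})\,v^\star_j$. Dropping the box constraint $v^\star\in[0,1]^K$ (enlarging the feasible set only increases the supremum, so this keeps a valid upper bound) and completing the square in each $v^\star_j$ — which is legitimate once $p$ has full support so that $w_j>0$, guaranteed because every action is observable — gives
\begin{align*}
\dec_\gamma(p;f,g,x) \le \max_{i^\star\in[K]}\left[\sum_{j} p_j f_j - f_{i^\star} + \frac{1}{\gamma}\sum_j \frac{p_j^2}{w_j} + \frac{1-2p_{i^\star}}{\gamma\,w_{i^\star}}\right],
\end{align*}
using $(p_j-\mathbbm{1}\{j=i^\star\})^2 = p_j^2 + (1-2p_{i^\star})\mathbbm{1}\{j=i^\star\}$. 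The first two terms are the instantaneous regret of $p$ against $i^\star$ under the predicted losses, and the remaining terms form a graph-weighted exploration cost.

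The core of the argument is to bound the exploration sum $\sum_j \frac{p_j^2}{w_j}\le\sum_j \frac{p_j}{w_j}$ by $\widetilde{\order}(\alpha(g,x))$, and this is exactly where the stochastic-graph structure enters. Using $\alpha(g,x)=\inf_{q\in\calQ(g,x)}\E_{G\sim q}[\alpha(G)]$, fix a near-optimal realizing distribution $q$, so that $w_j=\E_{G\sim q}[w_j(G)]$ with $w_j(G)=\sum_{i:(i,j)\in E}p_i$. By convexity of $t\mapsto 1/t$ and Jensen's inequality, $\tfrac{1}{w_j}=\tfrac{1}{\E_{G\sim q}[w_j(G)]}\le \E_{G\sim q}[\tfrac{1}{w_j(G)}]$, and hence
\begin{align*}
\sum_{j}\frac{p_j}{w_j}\ \le\ \E_{G\sim q}\left[\sum_{j}\frac{p_j}{w_j(G)}\right]\ =\ \order\!\left(\E_{G\sim q}[\alpha(G)]\log(K\gamma)\right)\ \le\ \order\!\left(\alpha(g,x)\log(K\gamma)\right),
\end{align*}
where the middle step applies, for each realized \emph{deterministic} strongly observable graph $G$, the classical combinatorial independence-number lemma of \citet{alon2015online} (valid once $p$ carries a $\mathrm{poly}(K\gamma)^{-1}$ support floor, which is what produces the $\log(K\gamma)$ factor). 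The pleasant point is that the Jensen direction is favorable, so the fractional observation probabilities $w_j$ are controlled by the \emph{expected} independence number rather than by any worst-case realized graph.

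It then remains to choose $p$ and balance the regret term against this exploration cost. I would take $p$ to be a graph-aware inverse-gap-weighting distribution centered at the greedy action $b=\argmin_i f_i$, in the spirit of \squareCB and the informed-graph analysis of \citet{zhang2023practical}, tuning the inverse-gap learning rate to $\Theta(\gamma)$ (the hypothesis $\gamma\ge 4$ keeps this in the admissible regime). The potentially dangerous contribution beyond $\sum_j p_j/w_j$ is the term $\frac{1-2p_{i^\star}}{\gamma\,w_{i^\star}}$, which is nonpositive unless $p_{i^\star}$ is small; but whenever $p_{i^\star}$ (and thus $w_{i^\star}$, via strong observability) is small, the inverse-gap structure forces $f_{i^\star}$ to be large relative to the greedy action, so the gain $\sum_j p_j f_j - f_{i^\star}$ compensates, exactly as in the $\alpha$-free \squareCB balance. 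Equating the regret and exploration budgets then yields the claimed $\order\!\big(\alpha(g,x)\log(K\gamma)/\gamma\big)$ bound.

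The main obstacle is precisely this choice of $p$: unlike the standard contextual-bandit case, vanilla inverse-gap weighting spreads mass over all $K$ actions and would leave a $K/\gamma$ regret term. To obtain $\alpha(g,x)$ rather than $K$ in \emph{both} terms, the construction must exploit the feedback graph so that the effective set of actions one pays to explore is independence-number-sized, and it must do so while keeping $w_{i^\star}$ bounded below enough to tame $\frac{1}{\gamma w_{i^\star}}$ for the adversary's worst $i^\star$. Making a single distribution $p$ simultaneously deliver the $\alpha$-scaled regret term, the $\alpha$-scaled exploration term (through the Jensen-plus-per-graph passage above), and adequate coverage of every strongly observable action is the delicate part of the proof.
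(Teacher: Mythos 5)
Your first two steps match the paper's proof: the paper likewise relaxes $v^\star\in[0,1]^K$ to $\R^K$, solves the inner quadratic in closed form to obtain $\sum_i p_i f(x,i)-f(x,i^\star)+\frac{1}{\gamma}\sum_{i\neq i^\star}\frac{p_i^2}{w_i}+\frac{1}{\gamma}\frac{(1-p_{i^\star})^2}{w_{i^\star}}$, and later uses exactly your Jensen argument ($1/w_j = 1/\E_{G\sim\mu}[w_j(G)] \leq \E_{G\sim\mu}[1/w_j(G)]$ for $\mu\in\calQ(g,x)$) to reduce the stochastic graph to realized deterministic graphs, finishing with Lemma 5 of \citet{alon2015online} and the infimum definition of $\alpha(g,x)$. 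However, your proposal has a genuine gap at precisely the point you flag as ``the delicate part'': you never construct the distribution $p$, and the graph-aware inverse-gap-weighting sketch you offer is not how the difficulty is resolved. The paper sidesteps the need for any explicit single $p$ by linearizing the maximum over $i^\star$ into a maximum over mixed strategies $q\in\Delta([K])$ and invoking \emph{Sion's minimax theorem} to swap $\min_p$ and $\max_q$. After the swap, for each fixed adversary distribution $q$ one simply plays the smoothed response $p_a=(1-\tfrac{1}{\gamma})q_a+\tfrac{1}{\gamma K}$: the linear terms $\sum_i p_i f(x,i)-\sum_i q_i f(x,i)$ collapse to $\order(1/\gamma)$ with no gap-dependence whatsoever, and the two quadratic terms combine into $\frac{2}{\gamma}\sum_i \frac{q_i(1-q_i)}{w_i}$ plus lower-order terms, which is exactly the weighted sum that the independence-number lemma controls. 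This is why no balance between ``regret'' and ``exploration'' terms, and no coverage argument for the adversary's worst $i^\star$, is ever needed --- the obstacle you describe (IGW paying $K/\gamma$, taming $\frac{1}{\gamma w_{i^\star}}$ when $p_{i^\star}$ is small) dissolves under the minimax swap rather than being overcome by a cleverer explicit construction.

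A secondary inaccuracy: you apply the lemma of \citet{alon2015online} directly to each realized strongly observable graph, but that lemma (\pref{lem:alon}) requires $G_{i,i}=1$ for \emph{all} vertices. Strongly observable graphs may contain vertices without self-loops (those observed by all other nodes), and the paper must treat them separately: for $i\notin S$ one has $w_i(p)=1-p_i$, and with the smoothed $p$ this gives $\frac{q_i(1-q_i)}{1-p_i}\leq 2q_i$, summing to $\order(1)$; only the self-loop vertices $S$ go through \pref{lem:alon} (which also needs the $\frac{1}{\gamma K}$ support floor you correctly anticipate, producing the $\log(K\gamma)$ factor). Your Jensen direction and the use of the infimum over $\calQ(g,x)$ are exactly right, but as written the proposal is incomplete: its central construction is acknowledged rather than supplied, and the per-graph combinatorial step is applied outside its hypotheses.
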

The proof of this lemma is deferred to \pref{app:partial_revealed} and is a refinement and generalization of \citet[Theorem 3.2]{zhang2023practical} which only concerns deterministic graphs.
Combining everything, we are now ready to prove \pref{thm:partial}.

\begin{proof}[Proof of Theorem~\ref{thm:partial}]
Setting $\gamma_1=\frac{3}{4}\gamma$ and $\gamma_2=\frac{1}{4}\gamma$ in \pref{thm:decbound} and combining it with \pref{lem:dec_translation},
we know that $\RegCB$ is at most
\begin{equation}\label{eqn:first_display}
\sum_{t=1}^T \dec_{\gamma}(p_t;f_t,g_t,x_t) + \frac{3}{4}\gamma \RegSq + \frac{1}{2}\gamma \RegLogG.
\end{equation}
Since $p_t$ is chosen by minimizing $\dec_{\gamma}(p;f_t,g_t,x_t)$,
we further apply \pref{lem:main-dec-lemma} to bound the regret by
\[
\order\left(\sum_{t=1}^T \frac{\alpha(g_t,x_t)\log(K\gamma)}{\gamma}\right) + \frac{3}{4}\gamma \RegSq + \frac{1}{2}\gamma \RegLogG.
\]
Finally, realizing $\alpha(g_t,x_t) \leq \alpha(\calG)$ (see \pref{lem:ind_trans}) and plugging in the choice of $\gamma$ finishes the proof.
\end{proof}

\subsection{Analysis for Fully Revealed Graphs}\label{sec:analysis_full}

From the analysis for the partially revealed graph setting, we see that the $\alpha(\calG)$ dependence in fact comes from $\alpha(g_t, x_t)$, the independence number with respect to the graph estimator $g_t$.
To improve it to $\alpha_t = \alpha(g^\star,x_t)$, we again need to connect two different graphs in the DEC definition using \pref{lem:z'2z}, as shown below.

\begin{restatable}{lemma}{decgammaFull}\label{lem:main-dec-lemma-full}
For $g, g'\in \conv(\calG)$, $f\in\conv(\calF)$, and $x\in \calX$, we have
\begin{align*}
&\min_{p\in\Delta([K])}\dec_{\gamma}(p;f,g',x) \leq \min_{p\in\Delta([K])} \dec_{\frac{\gamma}{3}}(p;f,g,x)  \\ &\qquad\qquad +\frac{\gamma}{12}\sum_{i=1}^K\sum_{j=1}^K\frac{(g(x,i,j)-g'(x,i,j))^2}{g(x,i,j)+g'(x,i,j)}.
\end{align*}
\end{restatable}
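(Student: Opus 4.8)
The plan is to reduce the claim to a pointwise-in-$p$ comparison and then exploit that the two DECs share identical linear terms. Let $p^\dagger$ be a minimizer of $\dec_{\frac{\gamma}{3}}(\,\cdot\,;f,g,x)$. Since the left-hand side is a minimum over $p$, it suffices to upper bound $\dec_{\gamma}(p^\dagger;f,g',x)$, and more generally I would establish, for every fixed $p\in\Delta([K])$, the pointwise inequality
\begin{align*}
\dec_{\gamma}(p;f,g',x) \leq \dec_{\frac{\gamma}{3}}(p;f,g,x) + \frac{\gamma}{12}\sum_{i=1}^K\sum_{j=1}^K\frac{(g(x,i,j)-g'(x,i,j))^2}{g(x,i,j)+g'(x,i,j)}.
\end{align*}
Taking the minimum over $p$ on both sides—legitimate because the error term does not depend on $p$—then immediately yields the lemma. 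Looking inside the supremum defining each DEC (see \pref{eqn:minimax_p_alg}), the regret terms $\sum_i p_i v^\star_i - v^\star_{i^\star}$ are identical, so it remains only to compare the two quadratic penalty terms for each fixed $i^\star$ and $v^\star$.

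The key step is converting the penalty built from $g$ at scale $\frac{\gamma}{3}$ into the penalty built from $g'$ at scale $\gamma$. I would invoke \pref{lem:z'2z} with $z = g(x,i,j)\paren{f(x,j)-v^\star_j}^2$ and $z' = g'(x,i,j)\paren{f(x,j)-v^\star_j}^2$, which gives
\begin{align*}
g(x,i,j)\paren{f(x,j)-v^\star_j}^2 \leq 3\,g'(x,i,j)\paren{f(x,j)-v^\star_j}^2 + \frac{(g(x,i,j)-g'(x,i,j))^2}{g(x,i,j)+g'(x,i,j)},
\end{align*}
where I cancel a common factor $\paren{f(x,j)-v^\star_j}^2$ inside the error quotient and bound the leftover $\paren{f(x,j)-v^\star_j}^2 \leq 1$ using $f,v^\star\in[0,1]$. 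Multiplying by $\frac{\gamma}{12}p_i$ and summing over $i,j$ turns the left side into exactly the penalty of $\dec_{\frac{\gamma}{3}}(p;f,g,x)$ (since $\frac{\gamma/3}{4}=\frac{\gamma}{12}$), while the factor $3$ on the right combines with $\frac{\gamma}{12}$ to reproduce the coefficient $\frac{\gamma}{4}$, giving precisely the penalty $\frac{\gamma}{4}\sum_i p_i\sum_j g'(x,i,j)\paren{f(x,j)-v^\star_j}^2$ of $\dec_{\gamma}(p;f,g',x)$.

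For the error contribution I would then use $p_i\leq 1$ to drop the weights, namely $\frac{\gamma}{12}\sum_{i,j}p_i\frac{(g-g')^2}{g+g'}\leq \frac{\gamma}{12}\sum_{i,j}\frac{(g-g')^2}{g+g'}$ since each summand is nonnegative; this is the advertised constant. Rearranging shows the penalty of $\dec_{\frac{\gamma}{3}}(p;f,g,x)$ exceeds that of $\dec_{\gamma}(p;f,g',x)$ by at most this constant, so for every $i^\star,v^\star$ the bracketed objective of $\dec_{\gamma}(p;f,g',x)$ is at most that of $\dec_{\frac{\gamma}{3}}(p;f,g,x)$ plus the constant. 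Taking the supremum over $(i^\star,v^\star)$ (valid since the constant is independent of them) gives the pointwise inequality, and the minimum over $p$ finishes the proof. I expect the only delicate points to be the coefficient bookkeeping—tracking how the factor $3$ from \pref{lem:z'2z} interacts with the $\frac{\gamma}{3}$ versus $\gamma$ scalings to land exactly on $\frac{\gamma}{12}$—and justifying the two crude bounds $\paren{f(x,j)-v^\star_j}^2\leq 1$ and $p_i\leq 1$, which is where all the slack is spent; degenerate cases with vanishing denominators are handled by the usual convention that the corresponding triangular-discrimination terms are zero.
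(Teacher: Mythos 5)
Your proof is correct and takes essentially the same route as the paper's: the same application of \pref{lem:z'2z} with $z = g(x,i,j)\paren{f(x,j)-v^\star_j}^2$ and $z' = g'(x,i,j)\paren{f(x,j)-v^\star_j}^2$, the same crude bounds $\paren{f(x,j)-v^\star_j}^2\leq 1$ and $p_i\leq 1$, and the same structure of proving the inequality pointwise in $p$ (with the error constant independent of $i^\star,v^\star,p$) before minimizing over $p$. Your coefficient bookkeeping ($\tfrac{\gamma/3}{4}=\tfrac{\gamma}{12}$ and $3\cdot\tfrac{\gamma}{12}=\tfrac{\gamma}{4}$) matches the paper exactly, and your observation that the step $\paren{f(x,j)-v^\star_j}^2\leq 1$ relies on $v^\star\in[0,1]^K$ is precisely why the paper notes this lemma breaks if the DEC constraint is relaxed to $v^\star\in\R^K$.
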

\begin{proof}[Proof sketch]
For a fix $p$, similarly to the proof of \pref{lem:dec_translation}, we apply \pref{lem:z'2z} with $z' = g'(x,i,j)(f(x,j)-v^\star_j)^2$ and $z = g(x,i,j)(f(x,j)-v^\star_j)^2$ for each $i$ and $j$ and arrive at
\begin{align*}
&\dec_{\gamma}(p;f,g',x) \leq \dec_{\frac{\gamma}{3}}(p;f,g,x)  \\ &\qquad\qquad +\frac{\gamma}{12}\sum_{i=1}^K p_i\sum_{j=1}^K\frac{(g(x,i,j)-g'(x,i,j))^2}{g(x,i,j)+g'(x,i,j)}.
\end{align*}
Further upper bounding $p_i$ by $1$ in the last term and then taking min over $p$ on both sides finishes the proof.
\end{proof}

This lemma allows us to connect the minimum DEC value with respect to $g_t$ and that with respect to $g^\star$, but with the price of $\frac{\gamma}{12}\sum_{i=1}^K \sum_{j=1}^K\frac{(g_t(x_t,i,j)-g^\star(x_t,i,j))^2}{g_t(x_t,i,j)+g^\star(x_t,i,j)}$, which, under fully revealed graphs, is essentially the per-round log loss regret in light of \pref{lem:log-sq-convert} since the oracle \AlgLog indeed receives observations for all $(i,j)$ pairs (importantly, this does not hold for partially revealed graphs). 
With this insight, we are ready to prove the main theorem.

\begin{proof}[Proof of Theorem~\ref{thm:fully}]
First note that \pref{thm:decbound} in fact also holds for fully revealed graphs; this is intuitively true simply because the fully revealed case is easier than the partially revealed case, and is formally explained in the proof of \pref{thm:decbound}.
Therefore, combining it with \pref{lem:dec_translation}, we still have $\RegCB$ bounded by
\[
\sum_{t=1}^T \min_p\dec_{\gamma}(p;f_t,g_t,x_t) + \frac{3}{4}\gamma \RegSq + \frac{1}{2}\gamma \RegLogG.
\]
By \pref{lem:main-dec-lemma-full}, this is at most
\begin{align*}
&\sum_{t=1}^T \min_p\dec_{\frac{\gamma}{3}}(p;f_t,g^\star,x_t) + \frac{3}{4}\gamma \RegSq + \frac{1}{2}\gamma \RegLogG  \\
&+ \frac{\gamma}{12}\sum_{t=1}^T\sum_{i=1}^K \sum_{j=1}^K\frac{(g_t(x_t,i,j)-g^\star(x_t,i,j))^2}{g_t(x_t,i,j)+g^\star(x_t,i,j)}.
\end{align*}
Finally, using \pref{lem:main-dec-lemma} and \pref{lem:log-sq-convert}, the above is further bounded by 
\[
\order\left(\sum_{t=1}^T \frac{\alpha_t\log(K\gamma)}{\gamma}\right) + \frac{3}{4}\gamma \RegSq + \frac{2}{3}\gamma \RegLogG,
\]
which completes the proof with our choice of $\gamma$.
\end{proof}

\section{Experiments}\label{sec:experiment}
In this section, we show empirical results of our \graphCBp algorithm  by testing it on a bidding application.
We start by describing this application, followed by modelling it as an instance of our partially revealed graph setting.
Specifically, consider a bidder (the learner) participating in a first-price auction.
At each round $t$, the bidder observes some context $x_t$, while the environment decides a competing price $w_t \in [0,1]$ (the highest price of all other bidders) and the value of the learner $v_t \in [0,1]$ for the current item (unknown to the learner herself).
Then, the learner decides her bid $c_t \in [0,1]$.
If $c_t \geq w_t$, the learner wins the auction, pays $c_t$ to the auctioneer (first-price), and observes her reward $v_t - c_t$;
otherwise, the learner loses the auction without observing her value $v_t$, and her reward is $0$.
In either case, at the end of this round, the auctioneer announces the winning bid to all bidders. For the learner, this information is only meaningful when she loses the auction, in which case $w_t$ (the winning bid) is revealed to her.

This problem is a natural instance of our model.
Specifically, we let the learner choose her bid $c_t$ from a discretized set $\calA_\epsilon=\{0,\epsilon,\dots,1-\epsilon,1\}$ of size $K=\frac{1}{\epsilon}+1$ for some granularity $\epsilon$. For ease of presentation, the $i$-th bid $(i-1)\epsilon$ is denoted by $a_i$.
The feedback graph $G_t$ is completely determined by the competing price $w_t$ in the following way:
\begin{align*}
G_{t,i,j}=\begin{cases} 1, & a_i < w_t \textrm{ and } a_j < w_t, \\
1, & a_i \ge w_t \textrm{ and } j \ge i, \\
0, & \textrm{otherwise},
\end{cases}
\end{align*}
where we again overload the notation $G_t$ to represent its adjacent matrix.
This is because when bidding lower than the competing price $w_t$, the learner observes $w_t$ and knows that bidding anything below $w_t$ gives $0$ reward; and when bidding higher than $w_t$, the bidder only knows that she would still win if she were to bid even higher, and the corresponding reward can be calculated since she knows her value $v_t$ in this case.
It is clear that this graph is strongly observable with independence number at most $2$ and is only partially revealed at the end of each round if the learner wins (and fully revealed otherwise).
On the other hand, the reward 
of action $i$ is $\mathbbm{1}\left[a_i \ge w_t\right] \cdot (v_t-a_i)$, which we translate to a loss in $[0,1]$ by shifting and scaling:
\begin{align}
\ell_{t,i} = \tfrac{1}{2}\left(1-\mathbbm{1}\left[a_i \ge w_t\right] \cdot (v_t-a_i)\right)\in[0,1]. \label{eqn:bidding_loss}
\end{align}

\paragraph{Regression oracles.} For the graph predictor, since the feedback graph is determined by the competing price $w_t$, we use a linear classification model to predict the distribution of $w_t$. Then at each round, we sample a competing price from this distribution, leading to the predicted feedback graph $g_t$. For the loss predictor $f_t$, since losses are determined by $w_t$ and $v_t$,
we use a two-layered fully connected neural network to predict the value $v_t$ and construct the loss predictors according to \pref{eqn:bidding_loss} with $w_t$ and $v_t$ replaced by their predicted values. For more details of the oracles and their training, see \pref{app:implementation}. 

\paragraph{Implementation of \graphCBp.} While \pref{eqn:minimax_p_alg} can be solved by a convex program, in order to implement \graphCBp even more efficiently, we use a closed-form solution of $p_t$ enabled by the specific structure of the predicted graph $g_t$ in this application. See \pref{app:implementation} for more details.

\subsection{Empirical Results on Synthetic Data}\label{sec:syn}
\paragraph{Data.} We first generate two synthetic datasets $\{(x_t,w_t,v_t)\}_{t=1}^T$  with $T=5000$ and $x_t\in \R^{32}$ for all $t\in[T]$. The competing price $w_t$ and the value $v_t$ are generated by $w_t=\frac{1}{\sqrt{32}}\theta_1^\top x_t+\epsilon_t, v_t=w_t+\max\{\frac{40}{\sqrt{32}}\theta_2^\top x_t, 0\}$, where $\theta_1, \theta_2\in \R^{32}$ are sampled from standard Gaussian and $\epsilon_t\sim \calN(0,0.05)$ is a small noise. All $w_t$ and $v_t$'s are then normalized to $[0,1]$. The two datasets only differ in how $\{x_t\}_{t=1}^T$ are generated. 
Specifically, in the context of linear bandits,
\citet{bastani2021mostly} showed that whether the simple greedy algorithm with no explicit exploration performs well depends largely on the context's diversity, roughly captured by the minimum eigenvalue of its covariance matrix.
We thus follow their work and generate two datasets where the first one enjoys good diversity and second one does not; see \pref{app:implementation} for details. 

\paragraph{Results.} We compare our \graphCBp with \squareCB \citep{foster2020beyond} (which ignores the additional feedback from graphs), the greedy algorithm (which simply picks the best action according to the loss predictors), and a trivial baseline that always bids $0$. 
For the first three algorithms, we try three different granularity values $\epsilon\in\{\frac{1}{25},\frac{1}{50},\frac{1}{75}\}$ leading to three increasing number of actions, and we run each of them 4 times and plot the averaged normalized regret ($\RegCB/T$) together with the standard deviation in \pref{fig:synthetic_dataset}. 
We observe that our algorithm performs the best and, unlike \squareCB, its regret almost does not increase when the number of action increases, matching our theoretical guarantee. In addition, consistent with~~\citet{bastani2021mostly}, while greedy indeed performs quite well when the contexts are diverse (top figure), it performs almost the same as the trivial ``not bidding'' baseline and suffers linear regret in the absence of diverse contexts (bottom figure).

\begin{figure}[!t]
\centering
\includegraphics[width=0.45\textwidth]{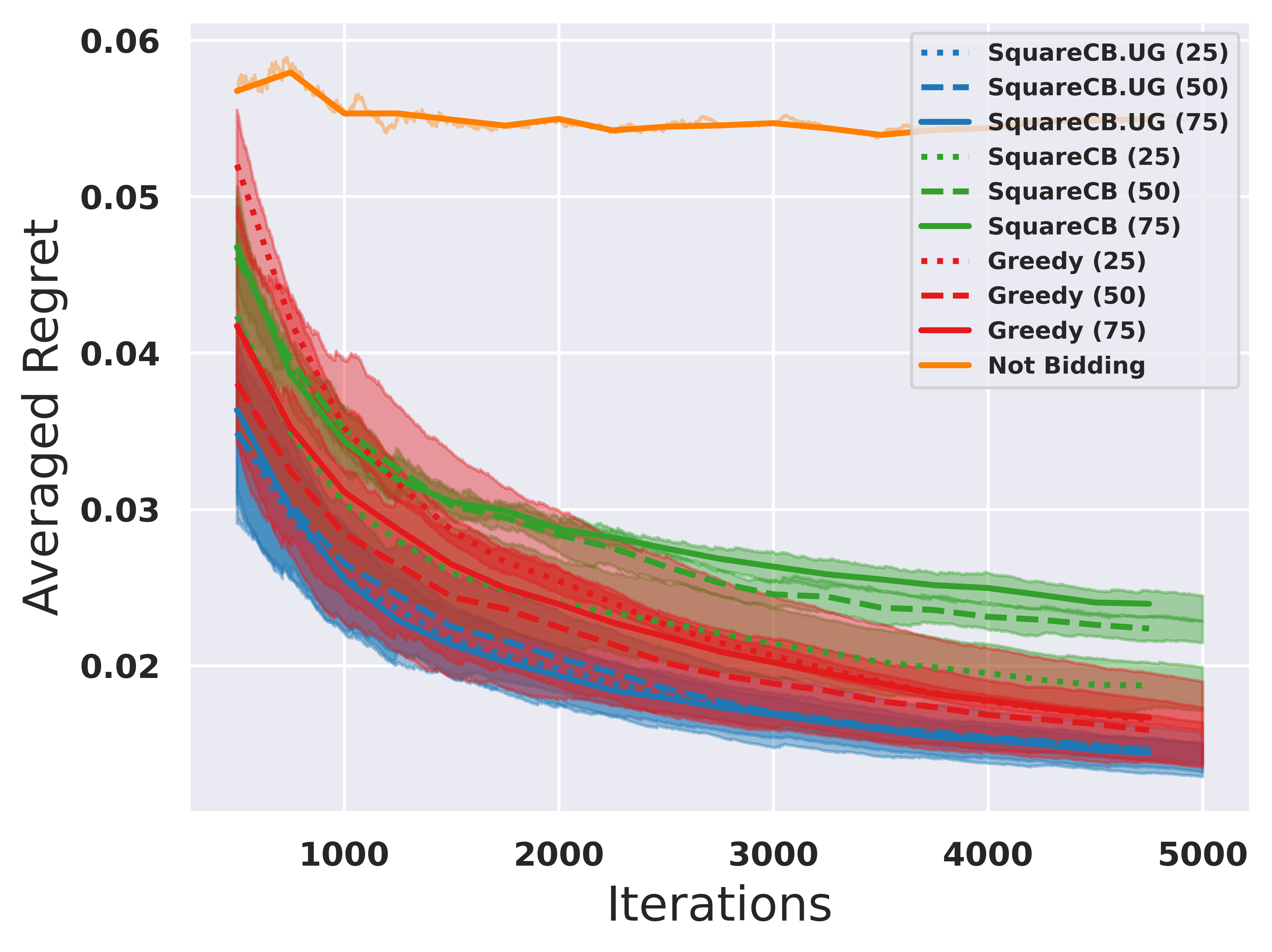}
\includegraphics[width=0.45\textwidth]{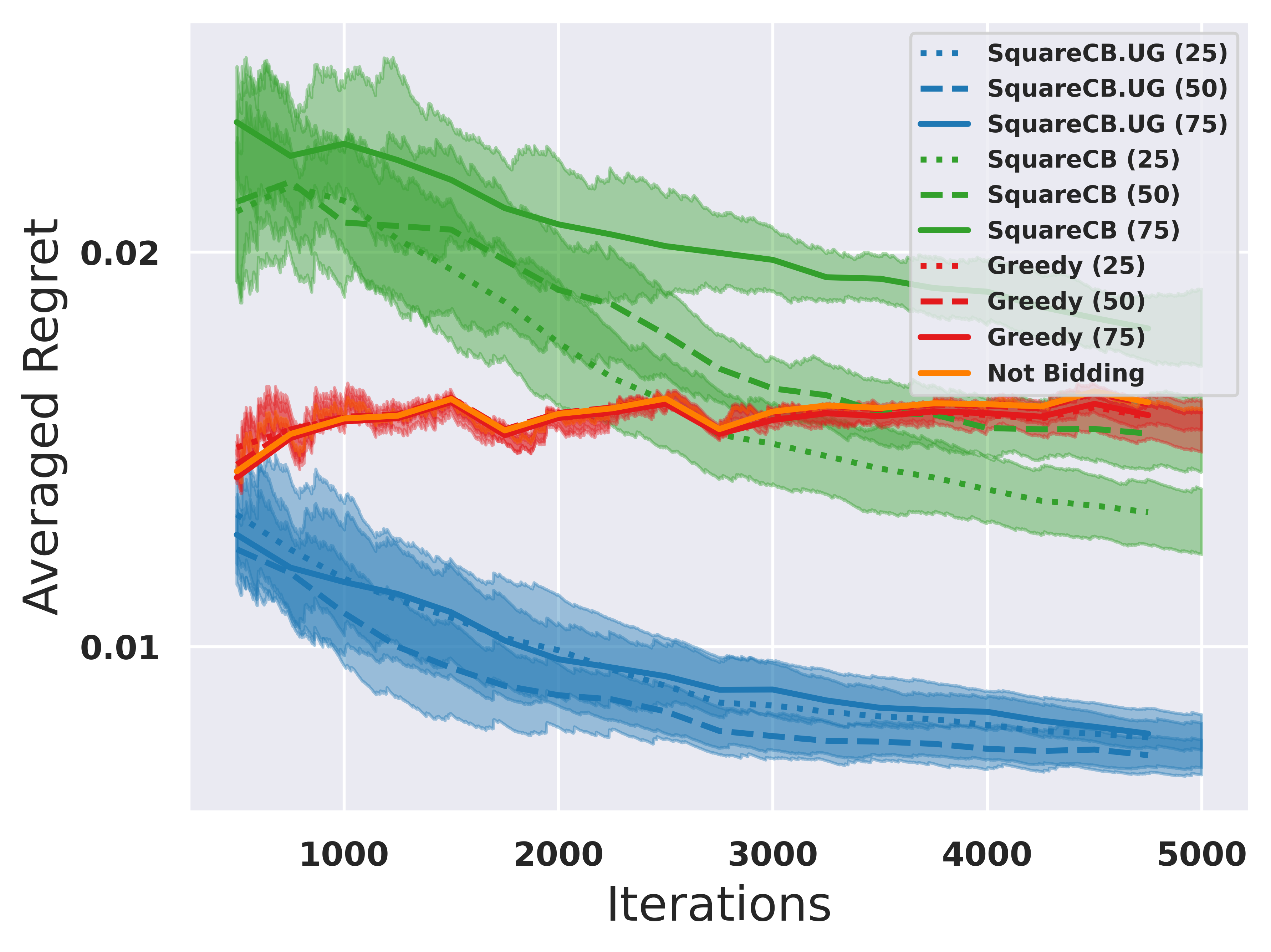}
\caption{
Comparison among \graphCBp, \squareCB, greedy, and a trivial baseline on one synthetic dataset with diverse contexts (top figure) and another one with poor diversity (bottom figure).}
\label{fig:synthetic_dataset}
\end{figure}

\begin{figure}[!t]
\centering
\includegraphics[width=0.45\textwidth]{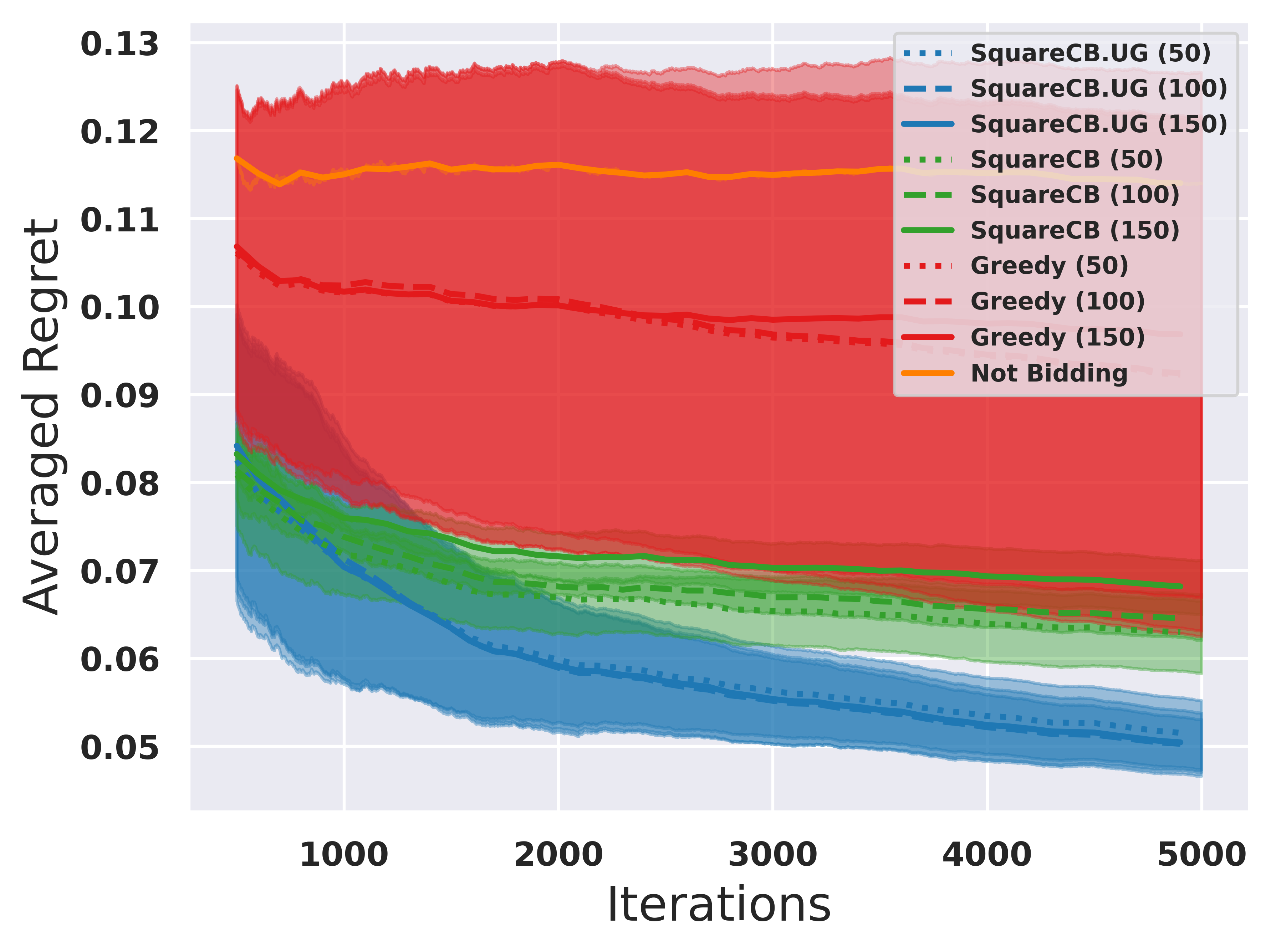}
\vspace{-3pt}
\caption{
Comparison among \graphCBp, \squareCB, greedy, and a trivial baseline on a real auction dataset. 
}
\label{fig:real_dataset}
\end{figure}

\subsection{Empirical Results on Real Auction Data}
\label{sec:real}
\paragraph{Data.}
We also conduct experiments on a subset of $5000$ samples of a real eBay auction dataset used in ~\citet{mohri2016learning};
see \pref{app:implementation} for details.

\paragraph{Results.}
We compare the four algorithms in the same way, with the only difference being the discretization granularity value
$\epsilon\in\{\frac{1}{50},\frac{1}{100},\frac{1}{150}\}$. The results are shown in \pref{fig:real_dataset}. Similar to what we observe in synthetic datasets, \graphCBp consistently outperforms other algorithms, demonstrating the advantage of exploration with graph information. The greedy algorithm's performance is unstable and has a relatively large variance 
due to the lack of exploration. 
With all these results for both real and synthetic datasets, we show that our algorithm indeed effectively explores the environment utilizing the uninformed graph structure and is robust to different types of environments.


\bibliography{ref}
\bibliographystyle{icml2024}

\newpage
\appendix
\onecolumn
\section{Omitted Details in \pref{sec:alg}}\label{app:partial_revealed}
We start with restating \pref{thm:decbound} along with its proof.
\decbound*
\begin{proof}
Define $\pi^\star(x)=\argmin_{i\in[K]}f^\star(x,i)$. 
We decompose $\RegCB$ as follows:
\begin{align}
    &\RegCB \nonumber \\
    &= \E\left[ \sum_{t=1}^T\sum_{i=1}^Kp_{t,i}f^\star(x_t,i) - \sum_{t=1}^Tf^\star(x_t,\pi^\star(x_t)) \right] \nonumber \\
    &= \E\left[ \sum_{t=1}^T \left( \sum_{i=1}^Kp_{t,i}f^\star(x_t,i) - f^\star(x_t,\pi^\star(x_t)) - \gamma_1\sum_{i=1}^K\sum_{j=1}^Kp_{t,i}g^\star(x_t,i,j)\left(f^\star(x_t,j)-f_t(x_t,j)\right)^2\right)\right]  \nonumber \\
    & \qquad - \E\left[\sum_{t=1}^T\gamma_2\sum_{i=1}^K\sum_{j=1}^Kp_{t,i}\frac{(g^\star(x_t,i,j)-g_t(x_t,i,j))^2}{g^\star(x_t,i,j)+g_t(x_t,i,j)}\right]+\gamma_1\E\left[\sum_{t=1}^T\sum_{i=1}^K\sum_{j=1}^Kp_{t,i}g^\star(x_t,i,j)\left(f^\star(x_t,j)-f_t(x_t,j)\right)^2\right] \nonumber \\
    &\qquad + \gamma_2\E\left[\sum_{t=1}^T\sum_{i=1}^K\sum_{j=1}^Kp_{t,i}\frac{(g^\star(x_t,i,j)-g_t(x_t,i,j))^2}{g^\star(x_t,i,j)+g_t(x_t,i,j)}\right] \nonumber \\
    &\leq \E\left[ \sum_{t=1}^T \max_{\substack{i^\star \in [K] \\ v^\star\in[0,1]^K, M^\star\in [0,1]^{K\times K} }}\left( \sum_{i=1}^Kp_{t,i}v_i^\star - v^\star_{i^\star} - \gamma_1 \sum_{i=1}^K\sum_{j=1}^Kp_{t,i}M^\star_{i,j}\left(v^\star_j-f_t(x_t,j)\right)^2\right.\right.\nonumber \\
    & \qquad \left.\left.-\gamma_2\sum_{i=1}^K\sum_{j=1}^Kp_{t,i}\frac{(M_{i,j}^\star-g_t(x_t,i,j))^2}{M_{i,j}^\star+g_t(x_t,i,j)}\right)\right]+\gamma_1\E\left[\sum_{t=1}^T\sum_{i=1}^K\sum_{j=1}^Kp_{t,i}g^\star(x_t,i,j)\left(f^\star(x_t,j)-f_t(x_t,j)\right)^2\right] \nonumber \\
    &\qquad+ \gamma_2\E\left[\sum_{t=1}^T\sum_{i=1}^K\sum_{j=1}^Kp_{t,i}\frac{(g^\star(x_t,i,j)-g_t(x_t,i,j))^2}{g^\star(x_t,i,j)+g_t(x_t,i,j)}\right] \nonumber \\
    & = \E\left[\sum_{t=1}^T\decp_{\gamma_1, \gamma_2}(p_t,f_t,g_t,x_t)\right] +\gamma_1\E\left[\sum_{t=1}^T\sum_{i=1}^K\sum_{j=1}^Kp_{t,i}g^\star(x_t,i,j)\left(f^\star(x_t,j)-f_t(x_t,j)\right)^2\right] \nonumber \\
    &\qquad + \gamma_2\E\left[\sum_{t=1}^T\sum_{i=1}^K\sum_{j=1}^Kp_{t,i}\frac{(g^\star(x_t,i,j)-g_t(x_t,i,j))^2}{g^\star(x_t,i,j)+g_t(x_t,i,j)}\right].\label{eqn:dec_reg_inter}
\end{align}
Next, since $\E[\ell_{t,a}\;\vert\; x_t]=f^\star(x_t,a)$ for all $t\in [T]$ and $a\in [K]$, we know that
\begin{align}
    &\E\left[ \sum_{t=1}^T\sum_{i=1}^K\sum_{j=1}^Kp_{t,i}g^\star(x_t,i,j)\left(f^\star(x_t,j)-f_t(x_t,j)\right)^2 \right] \nonumber \\
    &= \E\left[\sum_{t=1}^T\E_{A_t}\left[\sum_{i\in A_t}(f_t(x_t,i)-\ell_{t,i})^2 - \sum_{i\in A_t}(f^\star(x_t,i)-\ell_{t,i})^2 \right]\right]\leq \RegSq, \label{eqn:regressor_reg}
\end{align}
where the inequality is due to \pref{asm:regression_oracle_loss} and the way the algorithm feeds the oracle \AlgSq. In addition, according to \pref{lem:log-sq-convert} and the way the algorithm feeds the oracle \AlgLog in the partially revealed graph setting, we know that
\begin{align}\label{eqn:regressor_reg_graph}
    \E\left[\sum_{t=1}^T\sum_{i=1}^K\sum_{j=1}^Kp_{t,i}\frac{(g^\star(x_t,i,j)-g_t(x_t,i,j))^2}{g^\star(x_t,i,j)+g_t(x_t,i,j)}\right] = \E\left[\sum_{t=1}^T\E_{i_t\sim p_t}\left[\sum_{i=1}^K\frac{(g^\star(x_t,i_t,i)-g_t(x_t,i_t,i))^2}{g^\star(x_t,i_t,i)+g_t(x_t,i_t,i)}\right]\right] \leq 2\RegLogG.
\end{align}
Combining the above two inequalities, we obtain that
\begin{align}\label{eqn:regcb_regsq}
\RegCB \leq \E\left[\sum_{t=1}^T \decp_{\gamma_1,\gamma_2}(p_t;f_t,g_t,x_t)\right] + \gamma_1 \RegSq + 2\gamma_2 \RegLogG.
\end{align}
Moreover, \pref{eqn:regcb_regsq} also holds in the fully revealed feedback graph setting since in this case, according to \pref{lem:log-sq-convert} and the fact that the algorithm feeds all action-pairs to \AlgLog, we have
\begin{align}\label{eqn:fully_obs}
\E\left[\sum_{t=1}^T\sum_{i=1}^K\sum_{j=1}^Kp_{t,i}\frac{(g^\star(x_t,i,j)-g_t(x_t,i,j))^2}{g^\star(x_t,i,j)+g_t(x_t,i,j)}\right] \leq \E\left[\sum_{t=1}^T\left[\sum_{i=1}^K\sum_{j=1}^K\frac{(g^\star(x_t,i,j)-g_t(x_t,i,j))^2}{g^\star(x_t,i,j)+g_t(x_t,i,j)}\right]\right] \leq 2\RegLogG.
\end{align}
Plugging \pref{eqn:fully_obs} and \pref{eqn:regressor_reg} into \pref{eqn:dec_reg_inter} finishes the proof in the fully revealed feedback graph setting.

\end{proof}

\subsection{Value of the Minimax Program}
In this subsection, we show that for any context $x\in\calX$, $g\in \conv(\calG)$ and $f\in\conv(\calF)$, the minimum DEC value is roughly of order $\frac{\alpha(g,x)}{\gamma}$.

\decgamma*

\begin{proof}
    To bound $\dec_\gamma(p;f,g,x)$, it suffices to bound $\overline{\dec}_\gamma(p;f,g,x)$ defined as follows, which relaxes the constraint from $v^\star\in[0,1]^K$ to $v^\star\in \R^K$:
    \begin{align*}
        \overline{\dec}_\gamma(p;f,g,x) = \max_{\substack{i^\star \in [K] \nonumber \\ v^\star\in \R^K }}
	\left[ \sum_{i=1}^Kp_iv^\star_i - v^\star_{i^\star} - \frac{1}{4}\gamma\sum_{i=1}^K\sum_{j=1}^Kp_jg(x,i,j)\paren{f(x,i) - v^\star_i}^2  \right].
    \end{align*}
    For a positive definite matrix $M\in \R^{K\times K}$, we define norm $\|z\|_M=\sqrt{z^\top Mz}$. By taking the gradient with respect to $v^\star$ and setting it to zero, we know that for any $i^\star\in[K]$ and $p\in\Delta([K])$,
    \begin{align}\label{eqn:gradient}
        &\max_{v^\star\in\R^K}\left\{\sum_{i=1}^Kp_iv^\star_i- v^\star_{i^\star} - \frac{1}{4}\gamma\sum_{i=1}^K\sum_{j=1}^Kp_jg(x,i,j)(f(x,i)-v^\star_i)^2\right\} \nonumber\\
        &=\sum_{i=1}^Kp_if(x,i)-f(x,i^\star) + \frac{1}{\gamma}\|p-e_{i^\star}\|^2_{W(p,g,x)},
    \end{align}
    where $W(g,p,x)$ is a diagonal matrix with the $i$-th diagonal entry being $\sum_{j=1}^Kp_jg(x,j,i)$ and $e_{i^\star}\in \R^K$ corresponds to the basic vector with the $i$-th coordinate being $1$.
    
    Then, direct calculation shows that
    \begin{align}
    &\min_{p\in\Delta([K])}\overline{\dec}_{\gamma}(p;f,g,x) \nonumber \\
    &= \min_{p\in \Delta([K])}\max_{i^\star\in [K]}\max_{v^\star\in\R^K}\left\{\sum_{i=1}^Kp_iv^\star_i - v^\star_{i^\star} - \frac{1}{4}\gamma\sum_{i=1}^K\sum_{j=1}^Kp_jg(x,i,j)(f(x,i)-v^\star_i)^2\right\}\nonumber\\
    &= \min_{p\in \Delta([K])}\max_{i^\star\in [K]}\left\{\sum_{i=1}^Kp_{i}f(x,i) - f(x,i^\star) +\frac{1}{\gamma}\|p-e_{i^\star}\|^2_{W(p,g,x)}\right\} \tag{according to \pref{eqn:gradient}}\\
    &= \min_{p\in \Delta([K])}\max_{i^\star\in [K]}\left\{\sum_{i=1}^Kp_{i}f(x,i) - f(x,i^\star) +\frac{1}{\gamma}\sum_{i\neq i^\star}\frac{p_i^2}{\sum_{i'=1}^Kp_{i'}g(x,i',i)} + \frac{1}{\gamma}\frac{(1-p_{i^\star})^2}{\sum_{i'=1}^Kp_{i'}g(x,i',i^\star)}\right\} \label{eqn:exxx}\\
    &= \min_{p\in \Delta([K])}\max_{q\in \Delta([K])}\left\{\sum_{i=1}^Kp_{i}f(x,i) - \sum_{i=1}^Kq_if(x,i) +\frac{1}{\gamma}\sum_{i=1}^K\frac{(1-q_i)p_i^2}{\sum_{i'=1}^Kp_{i'}g(x,i',i)} + \frac{1}{\gamma}\sum_{i=1}^K\frac{q_i(1-p_{i})^2}{\sum_{i'=1}^Kp_{i'}g(x,i',i)}\right\}\nonumber\\
    &= \max_{q\in \Delta([K])}\min_{p\in \Delta([K])}\left\{\sum_{i=1}^Kp_{i}f_t(x,i)  - \sum_{i=1}^Kq_if(x,i) +\frac{1}{\gamma}\sum_{i=1}^K\frac{(1-q_i)p_i^2}{\sum_{i'=1}^Kp_{i'}g(x,i',i)} + \frac{1}{\gamma}\sum_{i=1}^K\frac{q_i(1-p_{i})^2}{\sum_{i'=1}^Kp_{i'}g(x,i',i)}\right\} \label{eqn:exx},
\end{align}
where the last inequality is due to Sion's minimax theorem.

Picking $p_a=\left(1-\frac{1}{\gamma}\right)q_a + \frac{1}{\gamma K}$ for all $a\in[K]$, we  obtain that for any distribution $\mu\in \calQ(g,x)$,
\begin{align}
&\max_{q\in \Delta([K])}\min_{p\in \Delta([K])}\left\{\sum_{i=1}^Kp_{i}f(x,i) - \sum_{i=1}^Kq_if(x,i) +\frac{1}{\gamma}\sum_{i=1}^K\frac{(1-q_i)p_i^2}{\sum_{i'=1}^Kp_{i'}g(x,i',i)} + \frac{1}{\gamma}\sum_{i=1}^K\frac{q_i(1-p_{i})^2}{\sum_{i'
    \in \calA}p_{i'}g(x,i',i)}\right\} \nonumber\\
    &\stackrel{(i)}{\leq} \max_{q\in \Delta([K])}\left\{-\frac{1}{\gamma}\sum_{i=1}^Kq_{i}f(x,i) + \frac{1}{\gamma K}\sum_{i=1}^Kf(x,i) +\frac{1}{\gamma}\sum_{i=1}^K\frac{(1-q_i)\left(q_i-\frac{1}{\gamma}q_i+\frac{1}{\gamma K}\right)^2+q_i\left(1-(1-\frac{1}{\gamma})q_i-\frac{1}{\gamma K}\right)^2}{\sum_{i'=1}^Kp_{i'}g(x,i',i)} \right\} \nonumber\\
    &\stackrel{(ii)}{\leq}\max_{q\in\Delta([K])} \left\{\frac{1}{\gamma} + \frac{1}{\gamma}\sum_{i=1}^{ K}\frac{2\left((1-\frac{1}{\gamma})^2q_i^2+\frac{1}{\gamma^2 K^2}\right)(1-q_i) + q_i\left(1-(1-\frac{1}{\gamma})q_i\right)^2}{\sum_{i'=1}^Kp_{i'}g_t(x_t,i',i)}\right\} \nonumber\\
    &\stackrel{}{\leq}\max_{q\in\Delta([K])} \left\{\frac{1}{\gamma} + \frac{1}{\gamma}\sum_{i=1}^{ K}\frac{2\left(q_i^2+\frac{1}{\gamma^2 K^2}\right)(1-q_i) + q_i\left((1-q_i)+\frac{q_i}{\gamma}\right)^2}{\sum_{i'=1}^Kp_{i'}g_t(x_t,i',i)}\right\} \nonumber\\
    &\stackrel{(iii)}{\leq} \max_{q\in\Delta([K])} \left\{\frac{1}{\gamma} + \frac{2}{\gamma^2} + \frac{1}{\gamma}\sum_{i=1}^{K}\frac{2q_i^2(1-q_i) + 2q_i\left(1-q_i\right)^2 + \frac{2q_i^3}{\gamma^2}}{\sum_{i'=1}^Kp_{i'}g(x,i',i)}\right\} \nonumber\\
    &= \max_{q\in\Delta([K])} \left\{\frac{1}{\gamma} + \frac{2}{\gamma^2} + \frac{2}{\gamma}\sum_{i=1}^{ K}\frac{q_i(1-q_i)}{\sum_{i'=1}^Kp_{i'}g(x,i',i)} + \frac{2}{\gamma^3}\sum_{i=1}^K\frac{q_i^3}{\sum_{i'=1}^Kp_{i'}g(x,i',i)}\right\} \nonumber\\
    &\stackrel{(iv)}{=} \max_{q\in\Delta([K])} \left\{\frac{1}{\gamma} + \frac{2}{\gamma^2} + \frac{2}{\gamma}\sum_{i=1}^{K}\frac{q_i(1-q_i)}{\E_{G\sim \mu}\left[\sum_{i'=1}^Kp_{i'}G_{i',i}\right]} + \frac{2}{\gamma^3}\sum_{i=1}^K\frac{q_i^3}{\E_{G\sim \mu}\left[\sum_{i'=1}^Kp_{i'}G_{i',i}\right]}\right\}\nonumber,
    \end{align}
where $(i)$ is by replacing $p_i$ with $(1-\frac{1}{\gamma})q_i+\frac{1}{\gamma K}$ (except for the $p_{i'}$ in the denominators); $(ii)$ holds since $f(x,i)\in[0,1]$ for all $i\in[K]$ and $(a+b)^2\leq 2(a^2+b^2)$, and we drop the last $-\frac{1}{\gamma K}$ term; $(iii)$ is because $\sum_{i'=1}^Kp_{i'}g(x,i',i)\geq \frac{1}{\gamma K}$ for all $i\in[K]$ since $p_{i'}\geq \frac{1}{\gamma K}$ for all $i'\in[K]$ and $\sum_{i'=1}^K g(x, i', i) \geq 1$ as $g(x,\cdot,\cdot)$ is the mean graph of a distribution of strongly observable graphs; $(iv)$ is by definition of $\calQ(g,x)$ and with an abuse of notation, $G_{i,j}$ represents the $(i,j)$-th entry of the adjacent matrix of $G$.

For a feedback graph $G\in\{0,1\}^{K\times K}$ and a distribution $u\in\Delta([K])$, with an abuse of notation, define $W(G,u)\in [0,1]^K$ as the probability for each node $i\in[K]$ to be observed according to $u$ and $G$. Specifically, for each $i\in[K]$, $W_i(G,u)=\sum_{j=1}^Ku_jG_{j,i}$. In addition, let $S\subseteq[K]$ be the nodes in $G$ that have self-loops, meaning that $G_{i,i}=1$ for all $i\in S$. Then, using Jensen's inequality, we know that
\begin{align}
    &\min_{p\in\Delta([K])}\overline{\dec}_{\gamma}(p;f,g,x) \nonumber \\
    &\leq \max_{q\in\Delta([K])} \E_{G\sim \mu}\left[\frac{1}{\gamma} + \frac{2}{\gamma^2} + \frac{2}{\gamma}\sum_{i=1}^{K}\frac{q_i(1-q_i)}{\sum_{i'=1}^Kp_{i'}G_{i',i}} + \frac{2}{\gamma^3}\sum_{i=1}^K\frac{q_i^3}{\sum_{i'=1}^Kp_{i'}G_{i',i}}\right] \tag{Jensen's inequality} \\
    &\stackrel{(i)}{=} \max_{q\in\Delta([K])} \E_{G\sim \mu}\left[\frac{1}{\gamma} + \frac{2}{\gamma^2} + \frac{2}{\gamma}\sum_{i=1}^{K}\frac{q_i(1-q_i)}{W_{i}(G,p)} + \frac{2}{\gamma^3}\sum_{i\in S}\frac{q_i^3}{W_{i}(G,p)} +\frac{2}{\gamma^3}\sum_{i\notin S}\frac{q_i^3}{W_i(G,p)} \right] \nonumber \\
    &\stackrel{(ii)}{\leq}\max_{q\in\Delta([K])}\E_{G\sim \mu} \left[\frac{1}{\gamma} + \frac{2}{\gamma^2} + \frac{2}{\gamma}\sum_{i=1}^{ K}\frac{q_i(1-q_i)}{W_i(G,p)} + \frac{2}{\gamma^2(\gamma-1)}\sum_{i\in S}q_i^2+\frac{2}{\gamma^3}\sum_{i\notin S}\frac{q_i^3}{\frac{K-1}{\gamma K}} \right]\nonumber\\
    &\stackrel{(iii)}{\leq}\max_{q\in\Delta([K])} \E_{G\sim \mu}\left[\order\left(\frac{1}{\gamma}+ \frac{1}{\gamma}\sum_{i=1}^{K}\frac{q_i(1-q_i)}{W_i(G,p)}\right)\right] \nonumber\\
    &\stackrel{(iv)}{\leq} \order\left(\frac{1}{\gamma}\right)+ \order\left(\frac{1}{\gamma}\right)\cdot\E_{G\sim \mu}\left[\max_{q\in\Delta([K])} \sum_{i=1}^{K}\frac{q_i(1-q_i)}{W_i(G,p)}\right] \label{eqn:dec_final_1},
\end{align}
where $(i)$ is by definition of $W(G,p)$; $(ii)$ is because for all $i \in S$, $W_i(G,p) \geq p_i \geq (1-\frac{1}{\gamma})q_i$, and for all $i\notin S$, every other node in $G$ can observe $i$ and $W_i(G,p)=1-p_i\geq \frac{K-1}{\gamma K}$ since $p_i\geq \frac{1}{\gamma K}$; $(iii)$ is because $K\geq 2$ and $\gamma\geq 4$; $(iv)$ is again due to Jensen's inequality. 

Next we bound $\sum_{i=1}^K\frac{q_i(1-q_i)}{W_i(G,p)}$ for any strongly observable graph $G$ with independence number $\alpha$. For notational convenience, we omit the index $G$ and denote $W_i(G,p)$ by $W_i(p)$. If $i\in [K]\backslash S$, we know that $W_i(p)=1-p_i$ and
\begin{align}\label{eqn:1}
    \frac{q_i(1-q_i)}{W_i(p)} = \frac{q_i(1-q_i)}{1-\left(1-\frac{1}{\gamma}\right)q_i-\frac{1}{\gamma K}} = \frac{q_i(1-q_i)}{\left(1-\frac{1}{\gamma}\right)(1-q_i)+\frac{K-1}{\gamma K}} \leq \frac{1}{1-\frac{1}{\gamma}}q_i\leq 2q_i,
\end{align}
where the first equality is because $p_i=(1-\frac{1}{\gamma})q_i+\frac{1}{\gamma K}$ for all $i\in[K]$ and the last inequality is because $\gamma\geq 4$.
If $i\in S$, we know that
\begin{align}\label{eqn:2}
    \sum_{i\in S}\frac{q_i(1-q_i)}{W_i(p)} &= \sum_{i\in S}\frac{q_i(1-q_i)}{\sum_{j:G_{j,i}=1}\left(\left(1-\frac{1}{\gamma}\right)q_j+\frac{1}{\gamma K}\right)} \nonumber\\
    &\leq \frac{\gamma}{\gamma-1}\sum_{i\in S}\frac{\left((1-\frac{1}{\gamma})q_i+\frac{1}{\gamma K}\right)(1-q_i)}{\sum_{j:G_{j,i}=1}\left(\left(1-\frac{1}{\gamma}\right)q_j+\frac{1}{\gamma K}\right)}\nonumber\\
    &\leq 2\sum_{i\in S}\frac{\left((1-\frac{1}{\gamma})q_i+\frac{1}{\gamma K}\right)(1-q_i)}{\sum_{j:G_{j,i}=1}\left(\left(1-\frac{1}{\gamma}\right)q_j+\frac{1}{\gamma K}\right)} \nonumber\\
    &\leq \order(\alpha\log(K\gamma)),
\end{align}
where the last inequality is due to \pref{lem:alon}. 

Combining \pref{eqn:1} and \pref{eqn:2}, we know that for any $q\in\Delta([K])$ and strongly observable graph $G$ with independence number $\alpha$,
    $\sum_{i=1}^K\frac{q_i(1-q_i)}{W_i(G,p)}\leq \order(\alpha\log(K\gamma))$. Plugging the above back to \pref{eqn:dec_final_1}, we know that 
    \begin{align*}
        \min_{p\in\Delta([K])}\overline{\dec}_{\gamma}(p;f,g,x)\leq \order\left(\frac{1}{\gamma}\right)\cdot\E_{G\sim \mu}\left[\order(\alpha(G)\log(K\gamma))\right] \leq \order\left(\frac{\alpha(g,x)\log(K\gamma)}{\gamma}\right),
    \end{align*}
    where the last inequality is due to the definition of $\alpha(g,x)$.

\end{proof}

The following two auxiliary lemmas have been used in our analysis.
\begin{restatable}{lemma}{ind_trans}\label{lem:ind_trans}
For all $\wh{g}\in\conv(\calG)$ and context $x\in\calX$, $\alpha(\wh{g},x)\leq \alpha(\calG)$.
\end{restatable}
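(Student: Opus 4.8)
The plan is to exploit the fact that the edge-probability function of a mixture of graph distributions is the corresponding mixture of the individual edge-probability functions, so that near-optimal witnesses for each $g_k\in\calG$ can be combined into a witness for $\wh{g}$. Write $\wh{g}=\sum_{k}\lambda_k g_k$ as a convex combination with $g_k\in\calG$, $\lambda_k\geq 0$, and $\sum_k\lambda_k=1$. Because $\alpha(g_k,x)$ is defined as an infimum over $\calQ(g_k,x)$, for any $\epsilon>0$ I can pick a distribution $q_k\in\calQ(g_k,x)$ over strongly observable graphs with
\[
\E_{G\sim q_k}[\alpha(G)]\leq \alpha(g_k,x)+\epsilon\leq \alpha(\calG)+\epsilon,
\]
where the last inequality uses $g_k\in\calG$ together with the definition $\alpha(\calG)=\sup_{g\in\calG,x\in\calX}\alpha(g,x)$.

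Next I would form the mixture distribution $q=\sum_k\lambda_k q_k$ and verify that $q\in\calQ(\wh{g},x)$. This requires two checks. First, $q$ is supported only on strongly observable graphs: strong observability is a property of each individual graph, and the support of $q$ is contained in the union of the supports of the $q_k$, each of which consists of strongly observable graphs. Second, the expected edge connections of $q$ must equal $\wh{g}(x,\cdot,\cdot)$; by linearity of expectation, for every pair $(i,j)$,
\[
\E_{G\sim q}\left[\mathbbm{1}\{(i,j)\in E\}\right]=\sum_k\lambda_k\,\E_{G\sim q_k}\left[\mathbbm{1}\{(i,j)\in E\}\right]=\sum_k\lambda_k\, g_k(x,i,j)=\wh{g}(x,i,j),
\]
which is exactly the defining constraint of $\calQ(\wh{g},x)$.

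Finally, since $q$ is a feasible point for the infimum defining $\alpha(\wh{g},x)$, I would conclude
\[
\alpha(\wh{g},x)\leq \E_{G\sim q}[\alpha(G)]=\sum_k\lambda_k\,\E_{G\sim q_k}[\alpha(G)]\leq \sum_k\lambda_k\big(\alpha(\calG)+\epsilon\big)=\alpha(\calG)+\epsilon,
\]
and then take $\epsilon\to 0$.

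The only genuine subtlety, and the step I would treat most carefully, is confirming that the mixture $q$ truly lands in $\calQ(\wh{g},x)$; the remaining manipulations are one-line applications of linearity of expectation and the definitions. The potential trap is that $\calQ$ demands the edge marginals match the given function \emph{exactly} rather than approximately, so the marginal computation above must be an exact identity (which it is, by linearity) rather than an inequality.
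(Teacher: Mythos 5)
Your proof is correct and follows essentially the same route as the paper's: both decompose $\wh{g}$ as a mixture of elements of $\calG$, mix witness distributions $q_k\in\calQ(g_k,x)$ into a distribution for $\wh{g}$ (checking by linearity that the edge marginals match exactly, so the mixture lies in $\calQ(\wh{g},x)$), and conclude via the definition of $\alpha(\calG)$. The only cosmetic difference is that you pick $\epsilon$-near-optimal witnesses and let $\epsilon\to 0$, whereas the paper mixes arbitrary witnesses and takes the infimum at the end; these handle the possibly unattained infimum equally well.
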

\begin{proof}
Let $u\in\Delta(\calG)$ be such that $\E_{g\sim u}[g]=\wh{g}$.
For each $g \in \calG$, consider any $q_g \in \calQ(g,x)$. We have by definition $\wh{q} \triangleq \E_{g\sim u}[q_g]\in \calQ(\wh{g},{x})$, leading to
\[
\alpha(\wh{g},x)=\inf_{q\in\calQ(\wh{g},{x})}\E_{G\sim q}[\alpha(G)] \leq \E_{G\sim \wh{q}}[\alpha(G)] = \E_{g\sim u}\left[\E_{G\sim q_g}[\alpha(G)] \right].
\]
Since $q_g$ can be any distribution in $\calQ(g,x)$,
the above implies
\[
\alpha(\wh{g},x) \leq \E_{g\sim u}\left[\inf_{\rho \in \calQ(g,x)}\E_{G\sim \rho}[\alpha(G)] \right]
= \E_{g\sim u}\left[\alpha(g, x)\right],
\]
which is at most $\E_{g\sim u}[\sup_{x\in\calX}\alpha(g,x)] \leq \alpha(\calG)$, finishing the proof.
\end{proof}

\begin{lemma}[Lemma 5 in~\citep{alon2015online}]\label{lem:alon}
Let $G = (V, E)$ be a directed graph with $|V|=K$, in which $G_{i,i}=1$ for all vertices $i\in [K]$. Assign each $i\in V$ with a positive weight $w_i$ such that $\sum_{i=1}^n w_i\leq 1$ and $w_i\geq \epsilon$ for all $i\in V$
for some constant $0 < \epsilon< \frac{1}{2}$. Then
\begin{align*}
    \sum_{i=1}^K\frac{w_i}{\sum_{j: G_{j,i}=1}w_j} \leq 4\alpha(G) \ln\frac{4K}{\alpha(G)\epsilon},
\end{align*}
where $\alpha(G)$ is the independence number of $G$.
\end{lemma}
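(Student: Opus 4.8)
The plan is to prove the bound by a geometric layering of the vertices according to the magnitude of their in-neighborhood weight, reducing the global sum to a per-layer combinatorial bound that is governed by the independence number.

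First I would set $W_i \triangleq \sum_{j: G_{j,i}=1} w_j$ and record the two elementary facts that drive everything: since every vertex carries a self-loop, $w_i \le W_i$, so each summand satisfies $w_i/W_i \le 1$; and since $\epsilon \le w_i$ and $\sum_i w_i \le 1$, every $W_i$ lies in $[\epsilon,1]$. This confines all the $W_i$ to a bounded multiplicative range, so I would bucket the vertices into layers $V_k = \{i : 2^{-(k+1)} < W_i \le 2^{-k}\}$ for $k=0,1,\dots,\lceil \log_2(1/\epsilon)\rceil$, of which only $\order(\log(1/\epsilon))$ are nonempty. Inside a fixed layer all the $W_i$ agree up to a factor of two, so $\sum_{i\in V_k} w_i/W_i \le 2^{k+1}\sum_{i\in V_k} w_i$, and the whole problem reduces to showing that the total weight of a single layer is at most $\order(\alpha\, 2^{-k})$. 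This would make each layer contribute $\order(\alpha)$, giving $\order(\alpha\log(1/\epsilon))$ after summing; carefully tracking constants (and using $K\ge\alpha$ to pass from $\log(1/\epsilon)$ to $\log(K/(\alpha\epsilon))$) is what would sharpen this into the stated factor $4\alpha\ln(4K/(\alpha\epsilon))$.

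The heart of the argument, and the step I expect to be the main obstacle, is the per-layer weight bound. I would greedily build a maximal independent set $I_k$ in the subgraph induced on $V_k$ (processing vertices in decreasing order of weight), so that $|I_k|\le\alpha(G)$, and then charge the weight of every vertex of $V_k$ to a representative in $I_k$. Charging an \emph{in}-neighbor $j$ of a chosen vertex $i$ is easy, because the total weight of $i$'s in-neighbors is bounded by $W_i \le 2^{-k}$, so each representative absorbs only $\order(2^{-k})$ of in-neighbor weight. The genuinely delicate point is that a directed independent set must also exclude \emph{out}-neighbors, whose total weight is not directly controlled by $W_i$; reconciling this asymmetry, by exploiting both the weight ordering and the self-loop structure to reassign the problematic out-neighbors, is exactly the combinatorial content of Lemma~5 of \citet{alon2015online}. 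As a backup route I would keep in reserve the pointwise inequality $w_i/W_i \le \ln\!\big(W_i/(W_i-w_i)\big)$ combined with a peeling of the graph along independent sets, which turns the sum into a telescoping logarithm; note that the vertices whose only in-neighbor is themselves, for which $w_i/W_i=1$, already form an independent set and hence contribute at most $\alpha$, suggesting that such a telescoping can be organized to recover the same bound.
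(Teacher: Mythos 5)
Your outer scaffolding is sound: $W_i \triangleq \sum_{j:G_{j,i}=1} w_j$ indeed lies in $[\epsilon,1]$ thanks to the self-loops and $\sum_i w_i\le 1$, the dyadic layering uses only $\order(\log(1/\epsilon))$ buckets, and within a bucket $\sum_{i\in V_k} w_i/W_i \le 2^{k+1}\sum_{i\in V_k} w_i$ is correct. But the argument has a genuine gap at its declared heart: the per-layer weight bound $\sum_{i\in V_k} w_i = \order(\alpha\, 2^{-k})$ carries all the combinatorial content, and you never prove it. The greedy charging you sketch fails for exactly the reason you flag yourself: a representative $v$ placed in $I_k$ may have out-edges to every other vertex of the layer; those out-neighbors must be excluded from the independent set, yet their total weight has nothing to do with $W_v\le 2^{-k}$, so the charge absorbed per representative is unbounded and neither the weight ordering nor the self-loops repair this. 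Deferring the step to ``the combinatorial content of Lemma~5 of \citet{alon2015online}'' is circular --- the paper states this lemma by citation without proof, so that step \emph{is} the statement to be proved. Your backup route is likewise only a gesture: the pointwise bound $w_i/W_i\le \ln\bigl(W_i/(W_i-w_i)\bigr)$ is valid for $W_i>w_i$, and you correctly observe that the vertices with $W_i=w_i$ form an independent set contributing at most $\alpha$, but no peeling order is exhibited under which the logarithms telescope against an $\alpha$-controlled number of independent sets.

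The missing claim is in fact true, and your reduction is salvageable, but via a quadratic-form argument rather than charging. For $S_\delta=\{i: W_i\le\delta\}$ with total weight $W=\sum_{i\in S_\delta}w_i$, double counting ordered edges inside $S_\delta$ gives $\delta W \ge \sum_{i\in S_\delta} w_i \sum_{j\in S_\delta: G_{j,i}=1} w_j \ge \sum_{i\in S_\delta} w_i^2 + \sum_{\{i,j\}} w_i w_j$, the last sum ranging over unordered pairs adjacent in \emph{either} direction --- the in/out asymmetry that killed your charging disappears here, since an edge in either direction contributes the same product $w_iw_j$. Letting $A$ be the undirected adjacency matrix of this induced graph (whose independence number is at most $\alpha(G)$), the right side is at least $\tfrac{1}{2}w^\top(A+I)w \ge W^2/(2\alpha(G))$ by the Motzkin--Straus bound $p^\top(A+I)p\ge 1/\alpha$, whence $W\le 2\alpha(G)\delta$. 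Two accounting caveats remain even then: dyadic layers yield $4\alpha\bigl(\log_2(1/\epsilon)+\order(1)\bigr)$, whose leading constant $4/\ln 2 > 4$ means the stated inequality does not follow verbatim when $K/\alpha=\order(1)$; writing instead $\sum_i w_i/W_i=\int_0^\infty t^{-2}F(t)\,\diff t$ with $F(t)=\sum_{i:W_i\le t}w_i\le\min\{2\alpha t,1\}$ and $F(t)=0$ for $t<\epsilon$ gives $2\alpha\bigl(1+\ln\tfrac{1}{2\alpha\epsilon}\bigr)$, which does imply the stated $4\alpha\ln\tfrac{4K}{\alpha\epsilon}$ and is in any case all the paper needs, since the lemma is only invoked inside an $\order(\alpha\log(K\gamma))$ bound.
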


\subsection{Parameter-Free Algorithm in the Partially Revealed Feedback Graphs Setting}\label{app:parameter_free}

In this section, we show that applying doubling trick to \pref{alg:squareCB.GPLUS} achieves the same regret without the knowledge of $\alpha(\calG)$ in the partially revealed feedback setting. The idea follows~\citet{zhang2023practical}, which utilizes the value of the minimax problem \pref{eqn:minimax_p_alg} to guide the choice of $\gamma$.

Specifically, our algorithm goes in epochs with the parameter $\gamma$ being $\gamma_{s}$ in the $s$-th epoch and $\gamma_1=\sqrt{\frac{T}{\max\{\RegSq,\RegLogG\}}}$.
Within each epoch $s$ (with starting round $b_s$), at round $t$, we calculate the value
\begin{align}\label{eqn:doubling}
    R_t=\sum_{\tau=b_s}^t \min_{p\in\Delta([K])}{\dec}_{\gamma_{s}}(p;{f}_\tau,g_\tau,x_\tau),
\end{align}
and decide whether to start a new epoch by checking whether $R_t \leq \gamma_{s}\max\left\{\RegSq,\RegLogG\right\}$. Specifically, if $R_t \leq \gamma_{s}\max\left\{\RegSq,\RegLogG\right\}$, we continue our algorithm using $\gamma_{s}$; otherwise, we set $\gamma_{s+1}=2\gamma_s$ and restart the algorithm. 

Now we analyze the performance of the above described algorithm. Denote the $s$-th epoch to be $I_s=\{b_s,b_s+1, \ldots, e_s\} \triangleq [b_s, e_s]$ and let $S$ be the total number of epochs. First, using \pref{lem:ind_trans} and \pref{lem:main-dec-lemma}, we know that for any $t$ within epoch $s$, we have
\begin{align*}
    \gamma_{s} R_t\leq \otil\left(\sum_{t\in I_s}\alpha(\calG)\right)\leq \otil(\alpha(\calG)T).
\end{align*}
Applying this to the last round of the $(S-1)$-th epoch, we obtain:
\[
\gamma_{S-1}^2\max\left\{\RegSq,\RegLogG\right\} \leq \otil(\alpha(\calG)T),
\]
which, together with $\gamma_{S-1} = 2^{S-2}\gamma_1$ and the definition of $\gamma_1$, implies $2^S = \otil(\sqrt{\alpha(G)})$.

Next, consider the regret in epoch $s$. According to \pref{eqn:first_display}, we know that the regret within epoch $s$ is bounded as follows:
\begin{align}
    &\E\left[\sum_{t\in I_s}f^\star(x_t,i_t)-\sum_{t\in I_s}\min_{i\in[K]}f^\star(x_t,i)\right] \nonumber\\
    &\leq \E\left[\sum_{t\in I_s}\dec_{\gamma_s}(p_t;f_t,g_t,x_t)\right] + \frac{3\gamma_s}{4}\RegSq + \frac{1}{2}\gamma_s\RegLogG\nonumber\\
    &\leq \E\left[\sum_{t\in [b_s,e_s-1]}\dec_{\gamma_s}(p_t;f_t,g_t,x_t)\right] + \otil\left(\frac{\alpha(\calG)}{\gamma_s}\right) + 2\gamma_s\max\{\RegSq, \RegLogG\} \nonumber\\
    &\leq \otil(\gamma_s\max\{\RegSq, \RegLogG\}),\label{eqn:double-1}
\end{align}
where the second inequality uses \pref{lem:dec_translation} and \pref{lem:main-dec-lemma} again, and the last inequality is because at round $t=e_s-1$, $R_t\leq \gamma_s\max\{\RegSq,\RegLogG\}$ is satisfied and $\frac{\alpha(\calG)}{\gamma_s} \leq \frac{T}{\gamma_s} \leq \gamma_s \max\{\RegSq,\RegLogG\}$. Taking summation over all $S$ epochs, we know that the overall regret is bounded as
\begin{align}
    \E[\RegCB]&\leq \sum_{s=1}^{S}\otil(\gamma_s\max\{\RegSq, \RegLogG\}) = \sum_{s=1}^S\otil\left(2^{s-1}\sqrt{T\max\{\RegSq, \RegLogG\}}\right) \nonumber\\
    &\leq \otil\left(2^S\sqrt{T\max\{\RegSq, \RegLogG\}}\right) = \otil\left(\sqrt{\alpha(\calG)T\max\{\RegSq,\RegLogG\}}\right)\label{eqn:double-2},
\end{align}
which is exactly the same as \pref{thm:partial}.
\section{Omitted Details in \pref{sec:analysis_full}}

\decgammaFull*
\begin{proof}
    For any $v\in[0,1]^K$, applying \pref{lem:z'2z} with $z' = g'(x,i,j)(f(x,j)-v_j)^2$ and $z = g(x,i,j)(f(x,j)-v_j)^2$ for each $i$ and $j$, we know that 
    \begin{align}
&-3\gamma\sum_{i=1}^K\sum_{j=1}^Kp_ig'(x,i,j)(f(x,j)-v_j)^2 - \gamma\sum_{i=1}^K\sum_{j=1}^Kp_i\frac{(g(x,i,j) - g'(x,i,j))^2}{g(x,i,j)+g'(x,i,j)} \nonumber\\
    &\leq -\gamma\sum_{i=1}^K\sum_{j=1}^Kp_ig(x,i,j)(f(x,j)-v_j)^2 \label{eqn:translation_appendix_sym}.
\end{align}
Plugging this back to the definition of $\dec_{\gamma}(p;f,g,x)$, we get
\begin{align*}
    &\min_{p\in\Delta([K])}{\dec}_{\gamma}(p;f,g',x)\\
    &=\min_{p\in\Delta([K])} \max_{\substack{i^\star \in [K] \\ v^\star\in[0,1]^K}}
	\left[ \sum_{i=1}^Kp_iv^\star_i - v^\star_{i^\star} -\frac{1}{4}\gamma\sum_{i=1}^K\sum_{j=1}^Kp_ig'(x,i,j)\paren{f(x,j) - v^\star_j}^2 \right]\\
    &\stackrel{(i)}{\leq} \min_{p\in\Delta([K])} \max_{\substack{i^\star \in [K] \\ v^\star\in[0,1]^K}}
	\left[ \sum_{i=1}^Kp_iv^\star_i - v^\star_{i^\star} -\frac{1}{12}\gamma\sum_{i=1}^K\sum_{j=1}^Kp_ig(x,i,j)\paren{f(x,j) - v^\star_j}^2 \right.\\
    &\qquad\left.+\frac{1}{12}\gamma\sum_{i=1}^K\sum_{j=1}^Kp_i\frac{(g(x,i,j)-g'(x,i,j))^2}{g(x_t,i,j)+g'(x,i,j)}\right]\\
    &\stackrel{(ii)}{\leq} \min_{p\in\Delta([K])} \max_{\substack{i^\star \in [K] \\ v^\star\in [0,1]^K}}
	\left[ \sum_{i=1}^Kp_iv^\star_i - v^\star_{i^\star} -\frac{1}{12}\gamma\sum_{i=1}^K\sum_{j=1}^Kp_ig(x,i,j)\paren{f(x,j) - v^\star_j}^2 \right]\\
    &\qquad+ \frac{1}{12}\gamma\sum_{i=1}^K\sum_{j=1}^K \frac{(g(x,i,j)-g'(x,i,j))^2}{g(x,i,j)+g'(x,i,j)} \\
    &= \min_{p\in\Delta([K])}{\dec}_{\frac{\gamma}{3}}(p;f,g,x)+\frac{1}{12}\gamma\sum_{i=1}^K\sum_{j=1}^K\frac{(g(x,i,j)-g'(x,i,j))^2}{g(x,i,j)+g'(x,i,j)},
\end{align*}
where $(i)$ uses \pref{eqn:translation_appendix_sym} and $(ii)$ holds by trivially bounding $p_i$ in the last term by $1$.
\end{proof}

\section{Implementation Details}
\label{app:implementation}

We first point out that the DEC defined in~\citet{zhang2023practical} in fact relaxes the constraint $v^\star\in[0,1]^K$ to $v^\star\in \mathbb{R}^K$, which makes the problem of minimizing the DEC a simple convex program (see their Theorem 3.6).
In the partially revealed graph setting, we in fact can do the exact same trick because it does not affect our analysis at all.
Since our experiments are for an application with partially revealed graphs, we indeed implemented our algorithm in this way for simplicity.

However, this relaxation does not work for the fully revealed graph setting, since the analysis of \pref{lem:main-dec-lemma-full} relies on $v^\star \in[0,1]^K$.
Nevertheless, minimizing the DEC is still a relatively simple convex problem.
To see this, we first fix an $i^\star \in [K]$ and work on the supremum over $v^\star \in [0,1]^K$.
Specifically, define
\begin{align*}
v^\star(i^\star)=\argmax_{v\in[0,1]^K}\left\{\sum_{i=1}^Kp_iv_i-v_{i^\star}-\frac{\gamma}{4}\sum_{i=1}^Kp_i\sum_{j=1}^Kg(x,i,j)(f(x,j)-v_j)^2\right\}.
\end{align*}
Direct calculation shows that $v_j^\star(i^\star)=\max\left\{f(x,j)-\frac{(p_j-\mathbbm{1}\{j=i^\star\})^2}{\gamma\sum_{i=1}^Kp_ig(x,i,j)}, 0\right\}$ for all $j\in [K]$.
Let $h_{i^\star}(p)$ be the maximum value attained by $v^\star(i^\star)$, which is convex in $p$ since it is a point-wise supremum over functions linear in $p$.
It is then clear that solving $\argmin_{p\in\Delta([K])}\overline{\dec}_{\gamma}(p;f,g,x)$ is equivalent to solving the following constrained convex problem:
\begin{align*}
    &\min_{u\in\R,p\in\Delta([K])}~~~~~ u\\
    s.t.~~~~~~& h_{i^\star}(p)\leq u, ~\forall i^\star\in[K].
\end{align*}

\subsection{Omitted Details in \pref{sec:experiment}}\label{app:exp}
In this section, we include the omitted details for our experiments.

\paragraph{Dataset Details.} For the synthetic datasets, as mentioned in \pref{sec:syn}, the two datasets differ in how $\{x_t\}_{t=1}^T$ are generated. In the first dataset, each coordinate of $x_t\in\R^{32}$ is independently drawn from $\calN(0,1)$; in the second dataset, the first $8$ coordinates of $x_t$ is independently drawn from $\calN(0,1)$ and the remaining coordinates are all $1$. The real auction dataset we used in \pref{sec:real} is an eBay auction dataset (available at \url{https://cims.nyu.edu/~munoz/data/}) with the $t$-th datapoint consisting of a 78-dimensional feature vector $x_t$, a winning price of the auction $v_t$, and a competing price $w_t$. We treat the winning price as the value of the learner in our experiment. We randomly select a subset of $5000$ data points whose winning price is in range $[100,300]$ and normalize the value and the competing price to range $[0,1]$.

\paragraph{Model Details.} We implement the graph oracle as a linear classification model, aiming to predict the distribution of the competing price, denoted as $p_w(x_t)\in \Delta([K])$. With $p_w(x_t)$, we sample $\widehat w_t \sim p_w(x_t)$ and the predicted graph $g_t$ is calculated as
\begin{align}\label{eq:calc_gt}
g_t(x_t,i,j)=\mathbbm{1}[i \ge \widehat w_t] \cdot \mathbbm{1}[i \le j] +\mathbbm{1}[i < \widehat w_t] \cdot \mathbbm{1}[j < \widehat w_t].
\end{align}
We implement the loss oracle as a two-layer fully connected neural network with hidden size $32$. The neural network predicts the value of the data point $x_t$, denoted as $\widehat v_t$. The predicted loss of each arm $i$ is then calculated as:
\begin{align}\label{eqn:ft_exp}
f_t(x_t,i)=\frac{1}{2}\left[1-\mathbbm{1}\left[a_i \ge \widehat w_t\right] \cdot (\widehat v_t - a_i)\right],
\end{align}
where $a_i=(i-1)\epsilon$.
\paragraph{Training Details.} For the graph oracle, the loss function of each round $t$ is calculated as: $\frac{1}{K}\sum_{(i,j,b) \in S_t} \ell_{\log}(\E_{\wh{w}_t\sim p_w(x_t)}[g_t(x_t,i,j)], b)$,
where $S_t$ is the input dataset defined in \graphCBp. 
For the loss oracle, the loss function is calculated as $\frac{1}{|A_t|}\sum_{j \in A_t} (f_t(x_t,j)-\ell_{t,j})^2.$

We apply online gradient descent to train both models. Since the loss regression model aims to predict the value, we only update it when the learner wins the auction and observes the value. For experiments on the real auction dataset, learning rate is searched over $\{0.005,0.01,0.05\}$ for the loss oracle and over $\{0.01,0.05\}$ for the graph regression oracle. For experiments on the synthetic datasets, they are searched over $\{0.005,0.01,0.02\}$ and $\{0.01,0.05\}$ respectively. For \squareCB, we set the exploration parameter $\gamma=c \cdot \sqrt{KT}$ (based on what its theory suggests), where $c$ is searched over $\{0.5,1,2\}$. For our \graphCBp, we set $\gamma=c \cdot \sqrt{T}$, where $c$ is also searched over $\{0.5,1,2\}$. The experiment on the real auction dataset is repeated with $8$ different random seeds and the experiment on the synthetic datasets is repeated with $4$ different random seeds.

\paragraph{A Closed-Form Solution.} In this part, we introduce a closed form of $p_t$ which leads to a more efficient implementation of \graphCBp for the specific setting considered in the experiments. Specifically, given the predicted competing price $\wh{w}_t$, let $b\in[\frac{1}{\epsilon}+1]$ be the smallest action such that $(b-1)\eps \ge \widehat w_t$. Given $f_t$ and $g_t$ defined in \pref{eqn:ft_exp} and \pref{eq:calc_gt}, define a closed-form $p$ which concentrates on action $1$ (bidding $0$) and action $b$ as follows,
\begin{align}\label{eq:closed_form}
p_1=\begin{cases}
\frac{1}{2+\gamma\pa{1/2-f_t(x_t,b)}} &  f_t(x_t,b) \leq \frac{1}{2}\\
1-\frac{1}{2+\gamma\pa{f_t(x_t,b)-1/2}} & f_t(x_t,b) >\frac{1}{2}
\end{cases}, \quad p_b=1-p_1,
\end{align}
In the following lemma, we prove that the closed-form probability distribution in \pref{eq:closed_form} guarantees $\dec_\gamma(p;f_t,g_t,x_t) \le \frac{4}{\gamma}$,
which is enough for all our analysis to hold (despite the fact that it does not exactly minimize the DEC).
\begin{lemma}
For any $x_t\in\calX$, $f_t \in \calF$, and
$g_t$ in the form of \pref{eq:calc_gt}, the probability distribution $p$ defined in \pref{eq:closed_form}
guarantees $\dec_\gamma(p,f_t,g_t,x_t) \le \frac{4}{\gamma}$.
\end{lemma}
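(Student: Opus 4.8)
The plan is to bound the \emph{relaxed} decision-estimation coefficient $\overline{\dec}_\gamma(p;f_t,g_t,x_t)$, in which the supremum is taken over $v^\star\in\R^K$ rather than $v^\star\in[0,1]^K$. Since enlarging the feasible set only increases the supremum, we have $\dec_\gamma(p;f_t,g_t,x_t)\le\overline{\dec}_\gamma(p;f_t,g_t,x_t)$, so a bound of $\frac{4}{\gamma}$ on the latter suffices (this is also the quantity actually minimized in the partially revealed setting). For the relaxed DEC the inner maximization over $v^\star$ is an unconstrained quadratic whose closed form was already derived in the proof of \pref{lem:main-dec-lemma} (see \pref{eqn:gradient}--\pref{eqn:exxx}): writing $f_i\triangleq f_t(x_t,i)$ and letting $W_i\triangleq\sum_{i'}p_{i'}g_t(x_t,i',i)$ be the probability that action $i$ is observed,
\[
\overline{\dec}_\gamma(p;f_t,g_t,x_t)=\max_{i^\star\in[K]}\Phi(i^\star),\qquad \Phi(i^\star)\triangleq\sum_{i=1}^K p_i f_i - f_{i^\star} + \frac{1}{\gamma}\sum_{i=1}^K\frac{(p_i-\mathbbm{1}\{i=i^\star\})^2}{W_i}.
\]
Thus the whole statement reduces to bounding $\Phi(i^\star)$ for every comparator $i^\star$.

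Next I would exploit the two structural facts specific to this application. First, since $p$ is supported on $\{1,b\}$ and (assuming $\wh{w}_t>0$, so that $b\ge2$) the predicted graph of \pref{eq:calc_gt} has row $1$ equal to $\mathbbm{1}\{j\le b-1\}$ and row $b$ equal to $\mathbbm{1}\{j\ge b\}$, the observation weights collapse to $W_j=p_1$ for $j\le b-1$ and $W_j=p_b$ for $j\ge b$. Second, the predicted losses of \pref{eqn:ft_exp} satisfy $f_i=\tfrac12$ for every $i<b$ (in particular $f_1=\tfrac12$), while for $i\ge b$, $f_i=\tfrac12(1-\wh{v}_t+a_i)$ is nondecreasing in $i$, so $f_b=\min_{i\ge b}f_i$. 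Consequently only four types of comparators need to be examined separately: $i^\star=1$, $i^\star=b$, $i^\star<b$ with $i^\star\ne1$, and $i^\star>b$; within each of the last two groups the pair $(f_{i^\star},W_{i^\star})$ is either constant or monotone, so one extreme is the worst case and the maximum over all $K$ actions reduces to a handful of scalar quantities.

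Then I would substitute the closed-form $p_1,p_b$ from \pref{eq:closed_form}, splitting into the two cases $f_b\le\tfrac12$ and $f_b>\tfrac12$. In each case $\sum_i p_i f_i=p_1 f_1+p_b f_b$ is explicit, the variance sums simplify because every $W_i\in\{p_1,p_b\}$ (so that e.g. $\tfrac{p_1^2}{W_1}=p_1$ and $\tfrac{(1-p_1)^2}{W_1}=\tfrac{p_b^2}{p_1}$), and each $\Phi(i^\star)$ reduces to elementary inequalities — principally $\frac{|f_b-1/2|}{2+\gamma|f_b-1/2|}\le\frac1\gamma$ together with the fact that the closed form guarantees $\max\{p_1,p_b\}\ge\tfrac12$. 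Carrying out these (routine) computations yields $\Phi(i^\star)\le\frac4\gamma$ in every case, establishing the claim after taking the maximum over $i^\star$.

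The main obstacle — and the reason the constants in \pref{eq:closed_form} are chosen exactly as they are — is the comparator $i^\star\notin\{1,b\}$, where $\Phi(i^\star)$ collects both a full regret contribution $-f_{i^\star}$ against a \emph{non-played} action and a variance term $\frac{1}{\gamma W_{i^\star}}$. When $i^\star$ lies on the small-weight side, this variance term grows linearly in $\gamma\,|f_b-\tfrac12|$ through $1/p_1$ (or $1/p_b$), and the only thing keeping $\Phi$ below $\frac4\gamma$ is that $\sum_i p_i f_i$ sits below $f_{i^\star}$ by precisely enough to cancel this growth, leaving a residual of the form $|f_b-\tfrac12|\cdot p_{\min}\le\frac1\gamma$; when $i^\star$ lies on the large-weight side, the variance is harmless but one must invoke the minimality of $f_b$ (resp. of $\tfrac12$) to control $-f_{i^\star}$. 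Verifying this balance, rather than the mechanical case enumeration, is the crux. The degenerate case $\wh{w}_t=0$ (i.e. $b=1$) collapses $p$ to a point mass on action $1$ and is handled trivially by the same computation.
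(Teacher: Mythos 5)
Your plan is correct and follows essentially the same route as the paper's proof: both pass to the relaxed coefficient $\overline{\dec}_\gamma$ with its closed form from \pref{eqn:exxx}, exploit that the two-point support makes every observation weight equal $p_1$ or $p_b$ (so the variance sum collapses to $1$ plus a single $\frac{1}{\gamma p_1}$ or $\frac{1}{\gamma p_b}$ term), and then split on $f_t(x_t,b)\lessgtr\frac12$ and on which side of $b$ the comparator $i^\star$ lies, closing each case with exactly the cancellation you identify, $p_{\min}\,\abs{f_t(x_t,b)-\tfrac12}=\frac{\abs{f_t(x_t,b)-1/2}}{2+\gamma\abs{f_t(x_t,b)-1/2}}\le\frac1\gamma$, together with $\max\{p_1,p_b\}\ge\frac12$. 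The computations you defer as routine are precisely the displayed chains in the paper, so there is no gap (your explicit treatment of the degenerate $b=1$ case is a small addition the paper omits).
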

\begin{proof}
According to the analysis in \pref{lem:main-dec-lemma}, it suffices to bound $\overline{\dec}_\gamma(p,f_t,g_t,x_t)$. Based on \pref{eqn:exxx}, we have
\begin{align*}
\overline{\dec}_\gamma(p,f_t,g_t,x_t) &= \max_{i^\star \in [K]} \left\{\sum_{i=1}^Kp_{i}f_t(x_t,i) - f_t(x_t,i^\star) +\frac{1}{\gamma}\sum_{i\neq i^\star}\frac{p_i^2}{\sum_{i'=1}^Kp_{i'}g_t(x_t,i',i)} + \frac{1}{\gamma}\frac{(1-p_{i^\star})^2}{\sum_{i'=1}^Kp_{i'}g_t(x_t,i',i^\star)}\right\} \\
&\le \max_{i^\star \in [K]} \left\{\sum_{i=1}^Kp_{i}f_t(x_t,i) - f_t(x_t,i^\star) +\frac{1}{\gamma}\sum_{i=1}^K\frac{p_i^2}{\sum_{i'=1}^Kp_{i'}g_t(x_t,i',i)} + \frac{1}{\gamma\sum_{i'=1}^K p_{i'}g_t(x_t,i',i^\star)} \right\}.
\end{align*}
Note that according to \pref{eqn:ft_exp} and the definition of $b$, for $i<b$, $f_t(x,i)=\frac{1}{2}$. We now first consider the case $f_t(x_t,b) \le \frac{1}{2}$ and $p_1=\frac{1}{2+\gamma(\frac{1}{2}-f_t(x_t,b))} \le \frac{1}{2}$.
\begin{enumerate}
\item Suppose $1 \le i^\star < b$, we observe that $f_t(x_t,i^\star)=\frac{1}{2}$ and have
\begin{align*}
\overline{\dec}_\gamma(p,f_t,g_t,x_t)& \le \frac{p_1}{2} + p_b f_t(x_t,b) - \frac{1}{2} + \frac{1}{\gamma}\pa{\frac{1}{p_1}+1} \\
&=\frac{p_1}{2}+(1-p_1)f_t(x_t,b)-\frac{1}{2}+\frac{1}{\gamma}\pa{2+\gamma\pa{\frac{1}{2}-f_t(x_t,b)}+1} \\
&=\frac{p_1}{2}-p_1f_t(x_t,b)+\frac{3}{\gamma} \\
&=p_1\pa{\frac{1}{2}-f_t(x_t,b)}+\frac{3}{\gamma} \\
&\le \frac{4}{\gamma}.
\end{align*} 
\item Suppose $b \le i^\star \le K$. According to~\pref{eqn:ft_exp}, we know that $f_t(x_t,i^\star) \ge f_t(x_t,b)$ and obtain
\begin{align*}
\overline{\dec}_\gamma(p,f_t,g_t,x_t)& \le \frac{p_1}{2} + p_b f_t(x_t,b) - f_t(x_t,b) + \frac{1}{\gamma}\pa{\frac{1}{p_b}+1} \\
&\le \frac{p_1}{2}+(1-p_1)f_t(x_t,b)-f_t(x_t,b)+\frac{3}{\gamma} \tag{since $p_b =1-p_1\ge \frac{1}{2}$}\\
&=p_1 \pa{\frac{1}{2}-f_t(x_t,b)}+\frac{3}{\gamma} \\
&\leq\frac{4}{\gamma}.
\end{align*}
\end{enumerate}
Then we consider the case when $f_t(x_t,b) > \frac{1}{2}$ and $p_1=1-\frac{1}{2+\gamma(f_t(x_t,b)-\frac{1}{2})} > \frac{1}{2}$.
\begin{enumerate}
\item Suppose $1 \le i^\star < b$, we have
\begin{align*}
\overline{\dec}_\gamma(p,f_t,g_t,x_t)& \le \frac{p_1}{2} + p_b f_t(x_t,b) - \frac{1}{2} + \frac{1}{\gamma}\pa{\frac{1}{p_1}+1} \\
&=\frac{1}{2}+p_b\pa{f_t(x_t,b)-\frac{1}{2}}-\frac{1}{2}+\frac{1}{\gamma}\pa{\frac{1}{p_1}+1} \\
&\le p_b\pa{f_t(x_t,b)-\frac{1}{2}} + \frac{3}{\gamma} \tag{since $p_1 > \frac{1}{2}$}\\
&\le \frac{4}{\gamma}.
\end{align*} 
\item Suppose $b \le i^\star \le K$, we have
\begin{align*}
\overline{\dec}_\gamma(p,f_t,g_t,x_t)& \le \frac{p_1}{2} + p_b f_t(x_t,b) - f_t(x_t,b) + \frac{1}{\gamma}\pa{\frac{1}{p_b}+1} \\
&= \frac{1}{2}+p_b\pa{f_t(x_t,b)-\frac{1}{2}}-f_t(x_t,b)+\frac{1}{\gamma}\pa{2+\gamma\pa{f_t(x_t,b)-\frac{1}{2}}+1} \\
&=p_b\pa{f_t(x_t,b)-\frac{1}{2}} + \frac{3}{\gamma} \\
&\le \frac{4}{\gamma}.
\end{align*}
\end{enumerate}
Combining the two cases finishes the proof.
\end{proof}


\end{document}